%% file: main.tex
\documentclass{article}

\usepackage{microtype}
\usepackage{graphicx}
\usepackage{enumitem}
\usepackage{subcaption}
\usepackage{booktabs} 
\usepackage{listings}
\usepackage{multirow}
\usepackage{multicol}
\usepackage{bm}
\usepackage{hyperref}

\usepackage[accepted]{icml2026}

\usepackage{amsmath}
\usepackage{amssymb}
\usepackage{mathtools}
\usepackage{amsthm}

\usepackage[capitalize,noabbrev]{cleveref}

\theoremstyle{plain}
\newtheorem{theorem}{Theorem}[section]

\newtheorem{lemma}[theorem]{Lemma}

\theoremstyle{definition}

\theoremstyle{remark}

\usepackage[textsize=tiny]{todonotes}

\icmltitlerunning{Conformal Path Reasoning: Trustworthy Knowledge Graph Question Answering via Path-Level Calibration}

\begin{document}
\twocolumn[
  \icmltitle{Conformal Path Reasoning: Trustworthy Knowledge Graph Question Answering \\ via Path-Level Calibration}

  \icmlsetsymbol{equal}{*}
  \icmlsetsymbol{corr}{\textdagger}
  
  \begin{icmlauthorlist}
    \icmlauthor{Shuhang Lin}{equal,rutgers}
    \icmlauthor{Chuhao Zhou}{equal,ind}
    \icmlauthor{Xiao Lin}{uiuc}
    \icmlauthor{Zihan Dong}{rutgers}
    \icmlauthor{Kuan Lu}{ind}
    \icmlauthor{Zhencan Peng}{rutgers}
    \icmlauthor{Jie Yin}{corr,sydney}
    \icmlauthor{Dimitris N. Metaxas}{corr,rutgers}
  \end{icmlauthorlist}

\icmlaffiliation{rutgers}{Rutgers University}
\icmlaffiliation{ind}{Independent Researcher}
\icmlaffiliation{uiuc}{University of Illinois Urbana-Champaign}
\icmlaffiliation{sydney}{The University of Sydney}
\icmlcorrespondingauthor{Jie Yin}{jie.yin@sydney.edu.au}
\icmlcorrespondingauthor{Dimitris N. Metaxas}{dnm@rutgers.edu}

\icmlkeywords{Machine Learning, ICML}
\vskip 0.3in
]




\printAffiliationsAndNotice{*Equal contribution: Shuhang Lin \textless shuhang.lin@rutgers.edu\textgreater, Chuhao Zhou \textless zhouchuhao0904@gmail.com\textgreater \quad $^\dagger$Co-corresponding authors}


\begin{abstract}
Knowledge Graph Question Answering (KGQA) offers grounded, interpretable reasoning, but existing methods often fail to provide reliable coverage guarantees over retrieved answers. 
While Conformal Prediction (CP) offers a principled framework for producing prediction sets with statistical guarantees, prior conformal KGQA methods suffer from two critical pitfalls:  
violated coverage guarantees due to invalid calibration, and weak score discriminability that yields excessively large prediction sets. We propose \textbf{Conformal Path Reasoning (CPR)}\footnotemark, a novel trustworthy KGQA framework built on two key innovations. First, query-level conformal calibration over path-level scores preserves exchangeability to ensure valid coverage guarantees. 
Second, we introduce the Residual Conformal Value Network (RCVNet), a lightweight module trained via PUCT-guided exploration to learn discriminative path-level nonconformity scores. 
Extensive experiments show that CPR significantly improves the Empirical Coverage Rate by 45\% while reducing  prediction set size by 52\% on average over conformal baselines across benchmark datasets, highlighting its effectiveness for reliable conformal reasoning over knowledge graphs. 

\end{abstract}

\section{Introduction}
\label{sec:intro}

Knowledge graphs (KGs) represent real-world facts as structured entity-relation triples, enabling explicit associations across heterogeneous concepts and supporting interpretable multi-hop reasoning~\cite{KG2021survey, guo2025reagan}. These structural advantages empower KGs as a cornerstone for question answering (QA), where reasoning paths can be traced directly to predicted answers~\cite{KGQA_Survey,KG_ReasoningSurvey}. However, in high-stakes applications such as medical diagnostics or financial risk assessments, returning a single deterministic answer is often insufficient, as it fails to capture the ambiguity inherent in relational KGs nor provides a rigorous basis for reliability assessment. This gap motivates a shift toward trustworthy reasoning frameworks that produce set-valued predictions statistically guaranteed to contain the ground truth at a user-specified risk level~\cite{li-etal-2024-traq,memmesheimer2025systematic}.


Conformal Prediction (CP) provides such a principled, distribution-free framework for producing prediction sets with finite-sample coverage guarantees~\cite{vovk2005algorithmic, first_cp}. 
Originally developed under the assumption of exchangeable---most commonly i.i.d.---calibration samples, 
CP has since been extended to the graph domain, including KG embeddings~\cite{zhu_cp_answerSet}, node classification via graph neural networks~\cite{cp_gnn}, and score prediction in uncertain KGs~\cite{UnKGE}. 
However, these methods are largely confined to atomic node-level assertions, validating individual triples rather than complex reasoning chains. Alternatively, approaches such as BetaE~\cite{Ren2020BetaEFA} and dynamic KG systems~\cite{Takeuchi2025UncertaintyAwareDKA} instead model uncertainty implicitly through probabilistic embeddings or LLM-generated confidence scores, but lack rigorous statistical guarantees. UaG~\cite{Ni_Wang_Cheng_Blasch_Derr_2025} attempts to apply conformal calibration into multi-step retrieval components in KGQA. However, bridging the gap between CP's formal statistical validity and the complexity of multi-hop reasoning paths remains an open challenge. 



\begin{figure*}[t]
\centering
\includegraphics[width=0.81\textwidth]{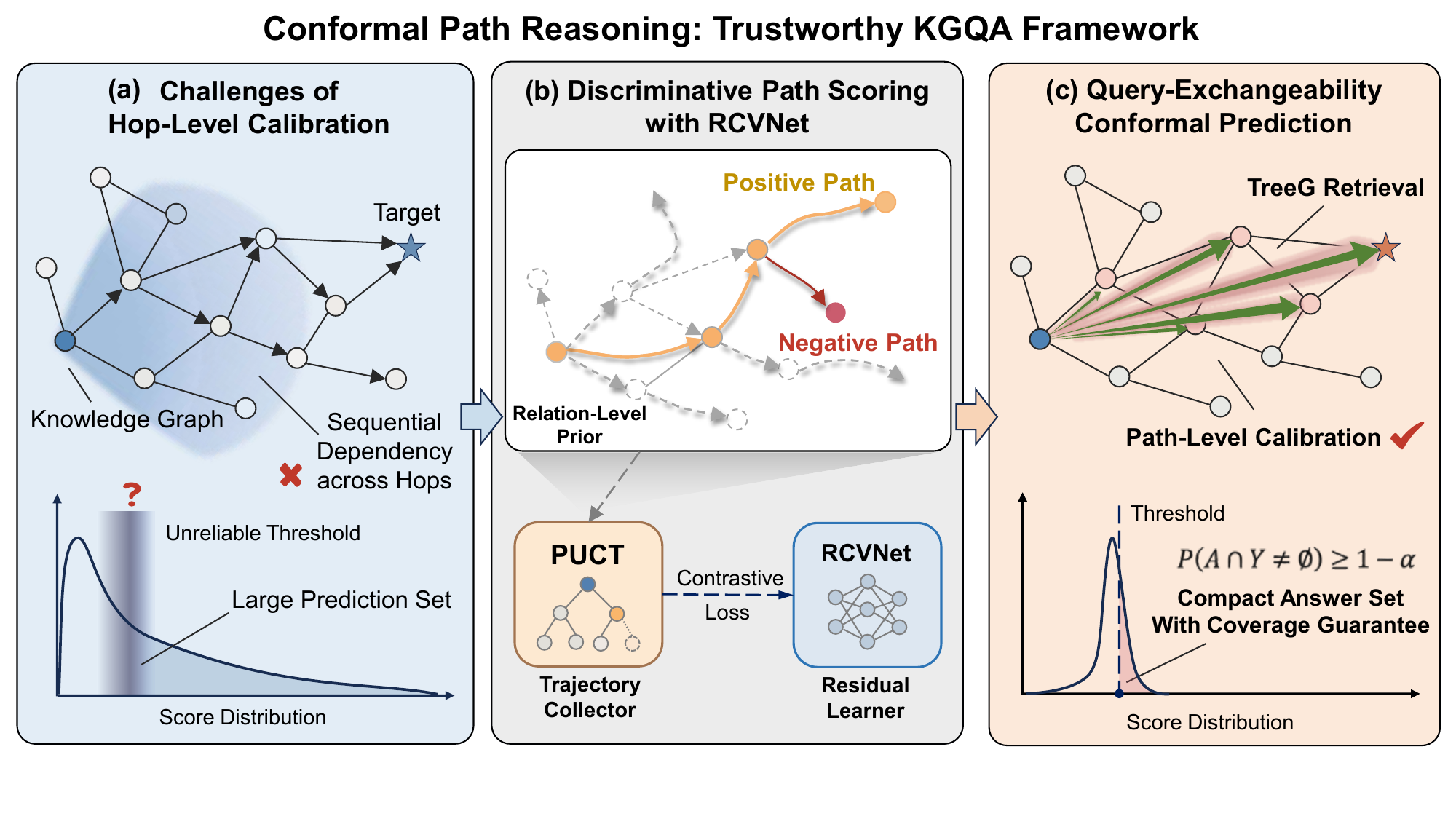}\vspace{-0.6cm}
    \caption{Overview of Conformal Path Reasoning (CPR). (a) Hop-level calibration introduces sequential dependencies that violate the exchangeability assumption required for valid coverage guarantees, and weak score discriminability leads to excessively large prediction sets. (b) CPR learns discriminative path-level nonconformity scores with RCVNet from PUCT-curated training trajectories. (c) Path-level calibration produces compact prediction sets with reliable coverage guarantees.}
    \label{fig:CPR}\vskip -0.1in
\end{figure*}
\footnotetext{Code is available at:~https://github.com/betterCallSherlock/CPR.}
Extending CP to multi-hop KGQA reveals a fundamental theoretical tension. The statistical guarantees of CP are grounded on the \emph{exchangeability} assumption---that calibration and test samples are drawn from the same distribution and remain invariant under permutation~\citep{first_cp}. KGs, however, are inherently governed by 
strong \textit{relational dependencies}: entities and relations form interconnected subgraphs, where local neighborhoods share information and multi-hop paths exhibit high-order 
sequential dependencies~\citep{xiong-etal-2017-deeppath, guo2026deepsieve}. 
Such relational dependencies create two primary challenges.
\vspace{-10pt}
\begin{itemize}[leftmargin=*,itemsep=-1pt]

\item \textbf{Sequential dependency chains}: When CP is applied at the hop level, the prediction set at hop $k$ determines which entities are reachable at hop $k+1$, creating a dependency among calibration scores that breaks the exchangeability assumption required for conformal guarantees~\citep{barber2023conformal}. Prior attempts to reconcile CP with graph structure---including transductive CP that avoids data splitting via permutation tests~\cite{transductiveCP}, neighborhood-aware splitting that preserves local topology~\cite{cp_gnn}, and degree-adaptive nonconformity scores~\cite{AdaptiveScores}---either incur prohibitive computational overhead or introduce distribution shifts that undermine coverage guarantees.

\item \textbf{Non-discriminative scoring}: Even beyond exchangeability, the efficiency of conformal prediction sets depends on the discriminative quality of nonconformity scores~\citep{lei2018distribution}. Scores derived solely from local entity similarity fail to capture path-level relevance in multi-hop reasoning. Consequently, the conformal mechanism must adopt overly 
conservative quantiles to satisfy coverage requirements, resulting in excessively large prediction sets that lack practical utility for action-oriented QA tasks. 
\end{itemize}
\vspace{-6pt}


These challenges suggest that both the calibration unit and scoring mechanism require reconsideration. First, hop-level calibration casts reasoning as sequential decisions over shared intermediate nodes and local neighborhoods, introducing dependencies that violate exchangeability. In contrast, treating queries as independent units enables valid conformal calibration at the query level. This motivates a shift from hop-level to path-level calibration. Second, local semantic similarity is insufficient to assess the correctness of an entire reasoning trajectory. This motivates a path-level scoring mechanism that captures global reasoning coherence to better discriminate correct paths from plausible negatives.

Based on these insights, we propose \textbf{Conformal Path Reasoning (CPR)}, a novel trustworthy KGQA framework illustrated in Figure~\ref{fig:CPR}. CPR leverages a Residual Conformal Value Network (RCVNet) to learn discriminative path-level nonconformity scores from PUCT-curated training trajectories and performs path-level conformal calibration to produce compact prediction sets while maintaining valid coverages guarantees.
Our main contributions are as follows:

\vspace{-10pt}
\begin{itemize}[leftmargin=*,itemsep=-1pt]
    \item \textbf{Query-Exchangeability Conformal Prediction:} 
    By encapsulating the multi-hop reasoning process into path-level scores aggregated per query, CPR bypasses relational dependencies and preserves exchangeability at the query level, enabling valid conformal guarantees.
    \item \textbf{Learnable Discriminative Path Scoring:} 
    We propose RCVNet, a lightweight residual network trained on PUCT-curated trajectories to learn discriminative path-level scores that distinguish correct paths from plausible negatives, yielding tighter prediction sets with valid coverage.
    \item \textbf{Strong Empirical Improvements:} Experiments on multiple real-world benchmarks show that CPR substantially outperforms existing conformal baselines, achieving higher valid coverage with smaller prediction sets.
\end{itemize}
\section{Related Work}

\subsection{Uncertainty Quantification in KGQA}
Recent work has demonstrated the efficacy of path-based reasoning for KGQA.
RoG~\citep{RoG} is a planning-retrieval-reasoning framework that generates relation paths as faithful plans grounded by KGs.
PoG~\citep{PoG} extends this paradigm with dynamic multi-hop exploration and beam search pruning to handle multi-entity questions.
While these methods achieve strong accuracy, they provide only point predictions without quantifying answer reliability, which is a critical limitation in high-stakes applications.

Uncertainty quantification in KGQA has received increasing attention, driven by the need for providing calibrated confidence to support decision-making. Some approaches model uncertainty implicitly. BetaE~\citep{Ren2020BetaEFA} embeds queries as Beta distributions for multi-hop logical reasoning, using the differential entropy as a proxy for query uncertainty without formal coverage guarantees. \citet{Takeuchi2025UncertaintyAwareDKA} assign LLM-generated confidence scores to triples based on source quality and temporal consistency, enabling confidence-aware retrieval but operating at the triple level. Recent work has leveraged CP to provide statistical guarantees. For KG embeddings, \citet{zhu_cp_answerSet} design nonconformity measures that construct answer sets with provable coverage. UnKGCP~\citep{UnKGE} extends CP to uncertain KGs using entropy-normalized scores to yield adaptive prediction intervals. Beyond specific KG tasks, CF-GNN~\citep{cp_gnn} establishes permutation invariance conditions for transductive node classification with topology-aware efficiency optimization. 

More recently, UaG~\citep{Ni_Wang_Cheng_Blasch_Derr_2025} extends CP to multi-hop KGQA by designing a stage-wise conformal pipeline across multiple components, including graph traversal and LLM-based answer evaluation. 
However, its hop-level calibration strategy 
introduces sequential dependencies that directly violate the exchangeability premise required for standard conformal guarantees. Our work shifts towards path-level calibration enabling valid conformal guarantees. When coupled with highly discriminative path-level scores learned via our proposed RCVNet, our method yields compact prediction sets while maintaining rigorous coverage.

\subsection{Conformal Prediction in  Large Language Models}

CP provides a distribution-free framework for producing prediction sets with finite-sample coverage guarantees~\citep{cp_intro}. Recent work has explored CP for uncertainty quantification in large language models (LLMs)~\citep{Geng2023ASOA}. Conformal Language Modeling (CLM) extends risk control to sequence generation~\citep{quach2024conformallanguagemodeling}. LoFreeCP enables CP for black-box LLM APIs by estimating nonconformity scores via repeated sampling~\citep{Su2024}. These methods have been adopted as baselines in uncertainty-aware NLP evaluation.

However, existing CP methods for LLMs primarily address uncertainty arising from stochastic text generation and treat reasoning as an isolated black box. They do not account for uncertainty induced by structured procedures such as multi-hop graph traversal. Our work extends CP to structured KG reasoning by defining nonconformity scores over complete reasoning paths, enabling coverage guarantees that respect the combinatorial nature of path-based retrieval.

\subsection{MCTS and PUCT for Structured Search}
Monte Carlo Tree Search (MCTS) is a widely used framework for structured decision-making that balances exploration and exploitation via upper confidence bounds~\citep{UCT,MCTS_Review}. A prominent extension is Predictor + Upper Confidence Bound applied to Trees (PUCT), popularized by AlphaGo and AlphaZero~\citep{AlphaZero-PUCT,PUCT}, which incorporates learned prior probabilities to guide search. By combining prior guidance with visit-count-based exploration bonuses, PUCT effectively collects informative search trajectories that can be distilled into parametric models~\citep{AlphaZero-PUCT}.
In this work, we employ PUCT during training to collect positive and negative trajectories for RCVNet, which provides path-level scores for conformal calibration at inference.

\section{Preliminaries}

\subsection{Problem Formulation: KGQA as Path Reasoning}

Given a query $q$ and a set of topic entities $\mathcal{E}_q \subseteq \mathcal{E}$ in a KG $\mathcal{G}=\{(h, r, t)\ \vert\ h, t \in \mathcal{E}, r \in \mathcal{R}\}$, the task of \textit{Knowledge Graph Question Answering} (KGQA) is formulated as finding a set of valid reasoning paths over $\mathcal{G}$. A reasoning path $p = (e_0, r_1, \dots, r_H, e_H)$ originates from a topic entity $e_0 \in E_q$ and terminates at a candidate answer $e_H \in \mathcal{E}$. Let $\mathcal{P}_q$ denote the set of all candidate paths for query $q$. 

To quantify the reliability of these paths, we define a cost function $V(p)$ for each path $p \in \mathcal{P}_q$, where smaller values indicate more reliable paths. The final answer set $\mathcal{A}_q$ for query $q$ is then given by 
\begin{equation}
\mathcal{A}_q = \{ \pi(p) \mid p \in \hat{\mathcal{P}}(q) \}, \nonumber
\end{equation}
where $\hat{\mathcal{P}}(q) \subseteq \mathcal{P}_q$ denotes the set of reasoning paths selected after conformal calibration for query $q$, and $\pi(p)$ denotes the terminal entity of path $p$.

\subsection{Background: Conformal Prediction}
Conformal Prediction (CP) is a practical framework for distribution-free uncertainty quantification~\citep{cp_intro}. Under the assumption of data exchangeability, CP provides finite-sample coverage guarantees, ensuring that prediction sets contain the true label at a user-specified risk level $\alpha$. 

\paragraph{Split Conformal Prediction.} 
Among existing CP variants, Split CP~\citep{first_cp} lends itself to KGQA settings, where the scoring model is typically trained before inference, satisfying the requirement that the scoring function remains fixed prior to calibration. Specifically, given a calibration set $\mathcal{D}_{\mathrm{cal}}=\{(x_i,y_i)\}_{i=1}^n$ and a nonconformity score function $S(x,y)$ that measures how poorly $y$ conforms to the prediction for $x$, Split CP computes a threshold
\begin{equation}
\tau_\alpha
=
\mathrm{Quantile}\big(
\{S(x_i,y_i)\}_{i=1}^n,
\lceil (n+1)(1-\alpha)\rceil / n
\big),\nonumber
\end{equation}
and constructs the prediction set $\mathcal{C}(x)=\{y:S(x,y)\le\tau_\alpha\}$.
The validity of this construction relies on \emph{exchangeability}: a sequence $(Z_1, \dots, Z_n, Z_{n+1})$ is exchangeable if its joint distribution is invariant to permutations. Under exchangeability between calibration and test samples, the prediction set achieves coverage $\mathbb{P}(y_{\mathrm{test}} \in \mathcal{C}(x_{\mathrm{test}})) \geq 1-\alpha$.

\subsection{The Pitfall of Hop-Level Calibration}
\label{sec:challenge}
As illustrated Figure~\ref{fig:CPR}(a), when applying CP to KGQA, a straightforward approach is to perform calibration at the hop level during multi-hop reasoning~\cite{Ni_Wang_Cheng_Blasch_Derr_2025}, since graph traversal naturally proceeds hop by hop~\cite{xiong-etal-2017-deeppath}. However, this approach introduces sequential dependencies that violate the exchangeability assumption required for valid conformal guarantees.

Formally, let $S_k^{(i)}$ denote the hop-$k$ nonconformity score for calibration query~$i$. Since hop-$k$ expansion depends on the preceding prediction set $\mathcal{C}_{k-1}^{(i)}$, we have $S_k^{(i)} = f(x_i, \mathcal{C}_{k-1}^{(i)})$, inducing a Markov dependency chain 
$$S_1 \to \mathcal{C}_1 \to S_2 \to \cdots.$$
Crucially, the threshold $\tau_\alpha$ defining $\mathcal{C}_{k-1}^{(i)}$ is a function of the \emph{entire} calibration set. Through this shared threshold, $S_k^{(i)}$ becomes statistically coupled with every other calibration query, rather than being a pointwise function of query $i$. Permuting calibration and test indices therefore changes the thresholds and reshapes every score, so the joint distribution of $\{S_k^{(i)}\}_{i=1}^{n+1}$ fails to be permutation-invariant, directly violating the exchangeability premise underlying split CP.

Consider a query \emph{``Where did the director of Inception study?''}. If the hop-1 prediction set $\mathcal{C}_1$ excludes \textit{Christopher Nolan}, the correct answer \textit{UCL} becomes unreachable at hop-2, regardless of subsequent calibration thresholding.

This issue is intrinsic to hop-level calibration and cannot be addressed solely through improved scoring functions. Nevertheless, 
while reasoning paths within a query are correlated, individual queries remain independent samples from the same data distribution. This observation motivates query-level path calibration, which restores exchangeability at the query level, enabling valid conformal calibration. 

\section{Methodology}

\label{sec:method}
This section details the CPR framework for trustworthy KGQA with statistical coverage guarantees. CPR is built upon two key components:
(1) \textbf{Discriminative Path Scoring}: a learning framework that leverages PUCT-guided exploration to collect informative positive-negative path pairs and trains RCVNet to learn discriminative path-level scores that distinguish correct paths from incorrect ones; and (2) \textbf{Query-Exchangeable Conformal Prediction (QE-CP):} A calibration procedure that exploits query-level exchangeability to provide valid coverage guarantees for KGQA. The path-level scores produced by the trained RCVNet serve as nonconformity scores for QE-CP, determining the final prediction set during inference. 


\subsection{Discriminative Path Scoring with RCVNet}
\label{subsec:rcvnet}

A key challenge in incorporating CP into KGQA lies in deriving nonconformity scores with strong discriminative power. Heuristics based on local semantic matching or surface-level features often fail to distinguish correct paths from plausible but incorrect ones, resulting in overly large prediction sets. As summarized in Figure~\ref{fig:CPR}(b), we address this challenge through (1) a PUCT-based trajectory collector that generates high-quality training paths and relation-level priors, and (2) the \textbf{R}esidual \textbf{C}onformal \textbf{V}alue \textbf{Net}work (RCVNet) trained on these collected trajectories to produce discriminative path scores.


\subsubsection{PUCT-Guided Trajectory Collection}
\label{sec:puct}


To address the lack of sufficient training paths in multi-hop KGQA, we employ PUCT~\cite{PUCT} during training as a trajectory collector for RCVNet to construct positive-negative path pairs. PUCT offers two benefits: its exploration bonus encourages visiting under-explored relations, generating negative paths that are semantically plausible yet incorrect; meanwhile, its search process accumulates relation-level statistics that serve as structural priors. 

\paragraph{PUCT Exploration Policy.} Starting from a topic entity, PUCT iteratively expands paths by selecting relations that balance semantic relevance with exploration of under-visited relations. Let $p$ denote the current partial path ending at entity $e$. PUCT selects the next relation $r^*$ at each step according to the following criterion: \vspace{-2pt}
\begin{equation}
r^* = \arg\max_{r \in \mathcal{R}(p)} \left( Q(p,r) + c_{\text{puct}} \cdot P(p,r) \cdot \frac{\sqrt{N(p)}}{1 + N(p,r)} \right),
\label{eq:puct}
\end{equation}
where $Q(p,r)$ denotes the empirical success rate of traversing relation $r$ from the current partial path $p$, $N(p,r)$ is the visit count of relation, and $N(p) = \sum_{r \in \mathcal{A}(p)} N(p,r)$ denotes the total number of visits to the partial path $p$. The term $\frac{\sqrt{N(p)}}{1 + N(p,r)}$ serves as an exploration bonus that encourages visiting under-explored relations. 
$P(p,r)$ is a semantic prior over relations. It biases path expansions toward relations relevant to the query $\tilde{q}$ early in the search:
\begin{equation}
P(p,r) = \mathrm{softmax}_{r' \in \mathcal{R}(p)}\Big(s(\tilde{q}, r') + s(\tilde{q}, r_{1:t})\Big)_r,
\label{eq:prior}
\end{equation}
where $s(\cdot, \cdot)$ denotes semantic similarity (negative cosine distance), measuring query alignment with both the candidate relation $r'$ and the path traversed so far $r_{1:t}$. This prior prevents arbitrary exploration and thereby expedites the collection of informative trajectories. 


\paragraph{PUCT Collector Outputs.} Through the exploration process above, the PUCT-guided collector provides two complementary 
inputs for RCVNet: 

(1) \textit{Relation-Level Structural Priors.} 
For each relation $r$, we maintain a Beta-Bernoulli conjugate prior 
$\text{Beta}(\alpha_r, \beta_r)$, initialized as $\text{Beta}(1, 1)$ and updated after each rollout as:
\begin{equation}
(\alpha_r, \beta_r) \leftarrow 
\begin{cases}
(\alpha_r + 1, \beta_r), & \text{if } r \text{ reaches answer}, \nonumber\\
(\alpha_r, \beta_r + 1), & \text{otherwise}.
\end{cases}
\label{eq:beta_prior}
\end{equation}
The posterior mean $\rho_r$ is calculated as:
\begin{equation}
\label{eq:rho}
\rho_r = \frac{\alpha_r}{\alpha_r + \beta_r},
\end{equation}
which indicates the Beta prior of relation $r$, distinguishing structurally 
relevant relations ($\rho_r \to 1$) from the peripheral ones ($\rho_r \to 0$). 
Therefore, $\rho_r$ provides a complementary structural prior when 
semantic similarity exhibits limited discriminative power for final path 
ranking. 


(2) \textit{Positive-Negative Path Pairs.} 
Let $p^* = (e_0, r_1, \ldots, r_H, e_H)$ denote a ground-truth reasoning 
path for query $q$, where $e_0$ is the topic entity and $e_H \in \mathcal{Y}_q$ is a correct answer entity. 
Let $\text{pre}_{h}(p) = (e_0, r_1, \ldots, e_{h-1})$ 
denote the prefix of $p$ up to hop $h$. 
We further define $\tilde{p}^*$ as a path of the same length $H$ as $p^*$ that traverses a different sequence of relations while terminating at a correct answer entity. 
We construct:
\begin{itemize}[leftmargin=*,itemsep=1pt,topsep=2pt]
    \item Positive paths: $\mathcal{P}^+ = \{p^*\} \cup \{\tilde{p}^*\}$, comprising ground-truth reasoning paths and other valid paths of the same length $H$ that terminate at a correct answer entity.
    
    \item Negative paths: $\mathcal{P}^- = \bigcup_{p \in \mathcal{P}^+} \bigcup_{h=1}^{H} \{\text{pre}_{h-1}(p) \circ (r', e') : (r', e') \neq (r_h, e_h)\}$, 
    consisting of paths that share the same ground-truth prefix up to hop $h-1$ but deviate at hop $h$.
\end{itemize}
This hop-wise construction generates negative paths at intermediate decision points, preventing the model from assigning overly optimistic scores to paths that diverge early. 
These positive-negative path pairs serve as training trajectories for RCVNet to learn discriminative path scores.

\subsubsection{Residual Conformal Value Network}
\label{sec:rcvnet_arch}

With the training trajectories collected via PUCT, we introduce RCVNet, a neural scoring module that designed to learn discriminative path scores.

For each path $p$, RCVNet, parameterized by $\bm{\theta}$, takes a feature vector $\mathbf{x}(p)$ and a contextual path embedding vector $\mathbf{c}(p)$ as input to compute a path score:
\begin{equation}
V_{\bm\theta}(p) = \text{RCVNet}_{\bm\theta}(\mathbf{x}(p), \mathbf{c}(p)),
\label{eq:vfinal}
\end{equation}
where $\bm\theta$ denotes the learnable parameters. The input feature vector $\mathbf{x}(p) = [s(\tilde{q}, r_t), s(\tilde{q}, r_{1:t}), \rho_{r_t}] \in \mathbb{R}^3$ integrates local semantic similarity, path-level similarity and the learned relation prior $\rho_{r_t}$ via Eq.~\ref{eq:rho}. The path contextual embedding vector $\mathbf{c}(p) = \text{concat}(\mathbf{q}_{\text{emb}}, \mathbf{r}_{\text{emb}}, \mathbf{p}_{\text{emb}}) \in \mathbb{R}^{3d}$ aggregates query, relation, and path embeddings. RCVNet further employs FiLM layers~\cite{FiLM} to inject this high-dimensional contextual information into the processing of scalar features, enabling context-aware path scoring. Details of the RCVNet architecture are provided in Appendix~\ref{app:implementation}.
RCVNet is trained on PUCT-curated positive-negative path pairs $(p^+, p^-)$ by minimizing the pairwise ranking loss:
\begin{equation}
\mathcal{L} = \operatorname{Softplus}\left(V_{\bm\theta}(p^+) - V_{\bm\theta}(p^-)\right),
\label{eq:ranking_loss}
\end{equation}
which encourages assigning lower scores 
to correct paths than to incorrect ones, i.e., $V_{\bm\theta}(p^+) < V_{\bm\theta}(p^-)$.

\subsection{Query-Exchangeable Conformal Prediction }
\label{sec:method_QECP}

Based on the path scores produced by RCVNet, we now introduce 
Query-Exchangeable Conformal Prediction (QE-CP) that provides valid 
coverage guarantees for KGQA. As shown in Figure~\ref{fig:CPR}(c), QE-CP consists of two components: (1) TreeG, a tree-based candidate path retrieval algorithm, 
and (2) a query-level calibration procedure that preserves exchangeability 
for valid conformal guarantees.

\paragraph{TreeG Retrieval.} 
To generate candidate paths, the TreeG algorithm is employed that leverages RCVNet scores $V_{\bm\theta}(p)$ alongside LLM-generated relational hints.

Given a query $q$, an LLM generates a set of relation hints $\mathcal{H}$ using fixed prompts with zero-temperature decoding to ensure determinism. TreeG expands paths hop-by-hop with branch-out size $B$ and active set size $A$, using a hint-augmented path score:
\begin{equation}
\label{eq:llm_hint}
V_{\bm\theta}'(p) = V_{\bm\theta}(p) - \beta \cdot \sum_{r \in p} \max_{h \in \mathcal{H}} s(r, h),
\end{equation}
where the second term rewards paths traversing relations aligned with the hints. At each step, every active path expands along the top-$B$ candidate relations and all resulting paths are globally ranked by $V_{\bm\theta}'(p)$. Only the top-$A$ paths are retained for the next hop expansion. 

\paragraph{Query-Level Calibration.}
As discussed in Section~\ref{sec:challenge}, hop-level calibration induces 
sequential dependencies that violate exchangeability. Instead, QE-CP treats the entire reasoning process of each query as a calibration unit. The structured sample for each query is defined as 
$Z_i = (q_i, \mathcal{P}_{q_i}, \mathcal{Y}_{q_i})$, where $q_i$ is a query, $\mathcal{P}{q_i}$ is the retrieved path set, and $\mathcal{Y}_{q_i}$ denotes the ground-truth answer set for $q_i$.

The \textbf{nonconformity score} $S(q)$ is then defined as the minimum path score 
required to include a correct answer in the retrieved set $\mathcal{P}_q$:
\begin{equation}
S(q) = 
\begin{cases} 
\min_{p \in \mathcal{P}^*_q} V_{\bm\theta}'(p), & \text{if } \mathcal{P}^*_q \neq \emptyset, \\
+\infty, & \text{if } \mathcal{P}^*_q = \emptyset,
\end{cases}
\label{eq:qecp_score}
\end{equation}
where $\mathcal{P}^*_q = \{p \in \mathcal{P}_q : \pi(p) \in \mathcal{Y}_q\}$ is the set of
paths terminating at correct answer entities. Intuitively, $S(q)$ captures the 
difficulty of covering correct answers: lower scores indicate easier 
queries where correct paths rank more favorably.



The validity of CP relies on exchangeability between 
calibration and test samples. Let  $\mathcal{D}_{\text{cal}} = \{q_i\}_{i=1}^{\vert\mathcal{D}_{\text{cal}}\vert}$ 
and $\mathcal{D}_{\text{test}} = \{q_j\}_{j=1}^{\vert\mathcal{D}_{\text{test}}\vert}$ denote the 
calibration and test query sets. Exchangeability is established through two observations. 
(i) \textbf{Query-level independence}: queries are i.i.d.\ samples drawn from 
a common distribution and are thus exchangeable. (ii) 
\textbf{Deterministic transformation}: the nonconformity score $S(q)$ is computed 
via TreeG retrieval, RCVNet scoring, and minimum aggregation (Eq.~\ref{eq:qecp_score}). 
By Lemma~\ref{lem:deterministic} in Appendix~\ref{app:proof}, deterministic 
functions preserve exchangeability, so $\{S(q_i)\}$ inherits exchangeability 
from $\{q_i\}$. This abstraction effectively encapsulates intra-query path 
dependencies, preserving exchangeability at the query level.

\paragraph{Calibration and Inferene.}
Following standard split CP, we compute the conformal threshold as the $(1-\alpha)$ quantile of the calibration scores:
\begin{equation}
    \hat{\tau}_\alpha = \operatorname{Quantile}_{1-\alpha}\left(\{S(q_i)\}_{i=1}^{|\mathcal{D}_{\text{cal}}|}\right).
    \label{eq:quantile}
\end{equation}
For a test query $q$, the prediction set includes all retrieved paths with scores at or below this threshold:
\begin{equation}
    \widehat{\mathcal{P}}_\alpha(q) = \{p \in \mathcal{P}_q : V_{\bm\theta}'(p) \le \hat{\tau}_\alpha\},
\end{equation}
which induces the answer set $\mathcal{A}_q = \{\pi(p) : p \in \widehat{\mathcal{P}}_\alpha(q)\}$.

\begin{theorem}[Coverage Guarantee]
\label{thm:qecp}
For any query $q \in \mathcal{D}_{\text{test}}$, the prediction set $\widehat{\mathcal{P}}_\alpha(q)$ contains at least one correct reasoning path with probability at least $1 - \alpha$:
\begin{equation}
\mathbb{P}\!\left(\widehat{\mathcal{P}}_\alpha(q) \cap \mathcal{P}^*_{q} \neq \varnothing\right) \ge 1 - \alpha.
\end{equation}
Consequently, the induced answer set satisfies $\mathbb{P}\!\left(\mathcal{A}_{q} \cap \mathcal{Y}_{q} \neq \emptyset\right) \ge 1 - \alpha$.
\end{theorem}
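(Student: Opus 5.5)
The plan is to reduce the statement to the standard split-conformal quantile argument applied to the scalar scores $S(q)$ of Eq.~\eqref{eq:qecp_score}.

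\textbf{Step 1: reduce path-level coverage to a threshold event.} Fix a test query $q$ and write $S_{n+1}=S(q)$, where $n=|\mathcal{D}_{\text{cal}}|$. Suppose $S_{n+1}\le\hat{\tau}_\alpha$. Since $\hat{\tau}_\alpha$ is finite whenever $\alpha\ge 1/(n+1)$, we then have $S_{n+1}<\infty$, so $\mathcal{P}^*_q\neq\varnothing$. Let $p^\star\in\mathcal{P}^*_q$ attain the minimum in Eq.~\eqref{eq:qecp_score}; the minimum exists because $\mathcal{P}_q$ is finite, being produced by TreeG with budgets $A,B$. Then $V'_{\bm\theta}(p^\star)=S_{n+1}\le\hat{\tau}_\alpha$, so $p^\star\in\widehat{\mathcal{P}}_\alpha(q)\cap\mathcal{P}^*_q$. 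This gives
\[
\{S(q)\le\hat{\tau}_\alpha\}\subseteq\{\widehat{\mathcal{P}}_\alpha(q)\cap\mathcal{P}^*_q\neq\varnothing\}.
\]
The converse also holds, but only this inclusion is needed. The answer-set statement follows immediately: $\pi(p^\star)\in\mathcal{A}_q\cap\mathcal{Y}_q$ because $p^\star\in\mathcal{P}^*_q$ means $\pi(p^\star)\in\mathcal{Y}_q$.

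\textbf{Step 2: establish exchangeability of the scores.} The calibration queries and the test query are i.i.d., so $(Z_1,\dots,Z_{n+1})$ is exchangeable. The map $Z\mapsto S$ is one fixed deterministic function applied to each sample individually. It is the composition of TreeG with fixed LLM hints at zero temperature, the RCVNet parameters $\bm\theta$ (trained on data disjoint from the calibration and test sets), and minimum aggregation. Crucially, this map does not depend on $\hat{\tau}_\alpha$ or on any other sample; this is exactly the contrast with hop-level calibration described in Section~\ref{sec:challenge}. By Lemma~\ref{lem:deterministic}, $(S_1,\dots,S_{n+1})$ is therefore exchangeable. The value $+\infty$ is allowed, and ties are handled in Step 3.

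\textbf{Step 3: apply the quantile lemma.} Interpret $\hat{\tau}_\alpha$ in Eq.~\eqref{eq:quantile} as the finite-sample-corrected quantile, that is, the $\lceil(n+1)(1-\alpha)\rceil$-th smallest calibration score, as in the split CP preliminaries. This is where I expect the only real subtlety: the uncorrected $(1-\alpha)$ empirical quantile only yields $1-\alpha-O(1/n)$. I would state the correction explicitly, or note that the guarantee holds with it.

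With $k=\lceil(n+1)(1-\alpha)\rceil$, exchangeability implies that the rank of $S_{n+1}$ among $S_1,\dots,S_{n+1}$ is uniform over $\{1,\dots,n+1\}$, with ties broken uniformly at random. Since ties only help the event $S_{n+1}\le\hat{\tau}_\alpha$, we obtain
\[
\mathbb{P}(S_{n+1}\le S_{(k)})\ge \frac{k}{n+1}\ge 1-\alpha.
\]
The standard identity $S_{n+1}\le S_{(k)}^{\text{cal}}\iff S_{n+1}\le S_{(k)}^{\text{cal}\cup\{n+1\}}$ links the quantile of the calibration scores to that of the augmented set. Combining this with Step 1 yields both claims of Theorem~\ref{thm:qecp}.

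Finally, I would remark that the guarantee is marginal, over the draw of the calibration and test queries. Queries with $\mathcal{P}^*_q=\varnothing$ (retrieval failures) are correctly counted as miscoverage through $S=+\infty$. Consequently, if more than an $\alpha$ fraction of calibration scores are infinite, the threshold $\hat{\tau}_\alpha=+\infty$ and the claim still holds trivially.
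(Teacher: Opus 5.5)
Your route matches the paper's: i.i.d.\ queries, exchangeability of the scores via Lemma~\ref{lem:deterministic}, the $\lceil (n+1)(1-\alpha)\rceil$-th order statistic, then the event inclusion and the passage to answers. However, the event inclusion in your Step~1 has a genuine hole. You obtain $S_{n+1}<\infty$ from ``$\hat{\tau}_\alpha$ is finite whenever $\alpha\ge 1/(n+1)$''. But $k\le n$ only makes $\hat{\tau}_\alpha$ equal to some calibration score, and a calibration score is $+\infty$ whenever TreeG retrieves no correct path (the answer lies outside the subgraph, or is pruned by the top-$A$ beam). Your closing remark itself admits that $\hat{\tau}_\alpha=+\infty$ can occur.

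On the event $\{\hat{\tau}_\alpha=+\infty,\ \mathcal{P}^*_q=\varnothing\}$, the score event $S(q)\le\hat{\tau}_\alpha$ holds, but $\widehat{\mathcal{P}}_\alpha(q)=\mathcal{P}_q$ contains no correct path. So the claim does \emph{not} hold trivially there. What your argument actually proves is
\[
\mathbb{P}\bigl(\widehat{\mathcal{P}}_\alpha(q)\cap\mathcal{P}^*_q\neq\varnothing\bigr)\ge 1-\alpha-\mathbb{P}\bigl(\hat{\tau}_\alpha=+\infty,\ \mathcal{P}^*_q=\varnothing\bigr),
\]
with the loss being exactly that last term. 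For $\alpha<1/(n+1)$, the case you set aside, the coverage is exactly $\mathbb{P}(\mathcal{P}^*_q\neq\varnothing)$.

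No proof can close this hole, because the theorem as stated needs a retrieval hypothesis. Coverage is always at most $\mathbb{P}(\mathcal{P}^*_q\neq\varnothing)$, so if TreeG's recall is below $1-\alpha$ the claim is false; in the extreme, every score is $+\infty$ and coverage is $0$. You should either assume $\mathcal{P}^*_q\neq\varnothing$ almost surely, or state the weaker bound above. Alternatively, state coverage $\ge\mathbb{E}[\min(k,N_f)]/(n+1)$, where $N_f$ counts the $n+1$ queries with finite score.

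The paper's proof has the same hole: its Step~5 asserts ``otherwise $S(q_{n+1})=+\infty>\tau_\alpha$'', which contradicts the $\tau_\alpha=+\infty$ convention of its own Step~4. Conversely, your Step~3 is sounder than the paper's Lemma~\ref{lem:coverage}. That lemma claims $\mathbb{P}(S_{n+1}\le\tau_\alpha)=\mathbb{P}(S_i\le\tau_\alpha)$ for every $i$, which is false in general because $\tau_\alpha$ is built from $S_1,\dots,S_n$ only. Your identity $S_{n+1}\le S^{\mathrm{cal}}_{(k)}\iff S_{n+1}\le S^{\mathrm{all}}_{(k)}$ is exactly what legitimizes the rank-symmetry step.
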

\noindent The proof (Appendix~\ref{app:proof}) follows from rank uniformity under exchangeability: the event $\{S(q) \le \hat{\tau}_\alpha\}$ guarantees the existence of a correctly-terminating path in $\widehat{\mathcal{P}}_\alpha(q)$.

The training, calibration and inference procedure of CPR is summarized in Algorithm~\ref{alg:cpr}. The pseudocode of the TreeG retrieval algorithm is provided in Appendix~\ref{app:algorithm}.

\begin{algorithm}[h]
\caption{Conformal Path Reasoning (CPR)}
\label{alg:cpr}
\begin{algorithmic}[1]
\renewcommand{\algorithmicrequire}{\textbf{Input:}} 
\renewcommand{\algorithmicensure}{\textbf{Output:}} 
\REQUIRE Training set $\mathcal{D}_{\text{train}}$, Calibration set $\mathcal{D}_{\text{cal}}$, Test set $\mathcal{D}_{\text{test}}$, 
KG $\mathcal{G}$,
Risk level $\alpha$, Maximum hop $H$
\ENSURE Answer sets $\{\mathcal{A}_q\}_{q \in \mathcal{D}_{\text{test}}}$ with coverage guarantee
\textbf{// Phase 1: RCVNet Training via PUCT Exploration}
\STATE Initialize RCVNet parameters $\bm{\theta}$
\STATE Initialize Beta priors $(\alpha_r, \beta_r) \gets (1, 1)$ for all $r \in \mathcal{R}$
\FORALL{$q \in \mathcal{D}_{\text{train}}$}
     \FOR{$h = 1, \dots, H$}
        \STATE Select path via PUCT (Eq.~\ref{eq:puct})
        \STATE Collect statistics and update Beta priors (Eq.~\ref{eq:beta_prior})
    \ENDFOR
    \STATE Generate positive paths $\mathcal{P}^+_q$ and negative paths $\mathcal{P}^-_q$
\ENDFOR
\STATE Optimize $\bm\theta$ by minimizing the loss $\mathcal{L}$ (Eq.~\ref{eq:ranking_loss})

\textbf{// Phase 2: Calibration}
\FORALL{$q_i \in \mathcal{D}_{\text{cal}}$}
    \STATE $\mathcal{P}_{q_i} \gets \textsc{TreeGRetrieval}(q_i, \mathcal{G}, H)$
    \STATE $S(q_i) \gets \min_{p \in \mathcal{P}_{q_i} : \pi(p) \in Y_{q_i}} V_{\bm\theta}'(p)$ (Eq.~\ref{eq:qecp_score})
\ENDFOR
\STATE $\hat{\tau}_\alpha \gets \text{Quantile}_{1-\alpha}\left(\{S(q_i)\}_{i=1}^{|\mathcal{D}_{\text{cal}}|}\right)$ (Eq.~\ref{eq:quantile})

\textbf{// Phase 3: Inference}
\FORALL{$q \in \mathcal{D}_{\text{test}}$}
    \STATE $\mathcal{P}_{q} \gets \textsc{TreeGRetrieval}(q, \mathcal{G}, H)$
    \STATE $\mathcal{A}_q \gets \{\pi(p) : p \in \mathcal{P}_q,\, V_{\bm\theta}'(p) \le \hat{\tau}_\alpha\}$
\ENDFOR
\STATE \textbf{return} $\{\mathcal{A}_q\}_{q \in \mathcal{D}_{\text{test}}}$
\end{algorithmic}
\vspace{0.3em}
\hrule
\vspace{0.3em}
\small
\textbf{Phase 1} trains RCVNet via PUCT-guided trajectory collection (Eq.~\ref{eq:puct}--\ref{eq:ranking_loss}). 
\textbf{Phase 2} calibrates the conformal threshold on $\mathcal{D}_{\text{cal}}$ (Eq.~\ref{eq:qecp_score},~\ref{eq:quantile}). 
\textbf{Phase 3} performs inference to generate prediction sets with coverage guarantee (Theorem~\ref{thm:qecp}).
\end{algorithm}
\vspace{-8pt}

\section{Experiments}

\subsection{Experimental Setup}
\paragraph{Datasets.} We evaluate our method on \textbf{WebQSP}~\cite{WebQSP} and \textbf{ComplexWebQuestions (CWQ)}~\cite{CWQ} to assess its effectiveness in \emph{risk-controlled set prediction} for KGQA. 
Following prior studies~\citep{subgraph_extract,Ni_Wang_Cheng_Blasch_Derr_2025}, we construct query-specific subgraphs on both datasets. To further evaluate the scalability, we also conduct experiments on \textbf{PathQuestion (PQ)} and \textbf{PathQuestion-Large (PQL)}~\cite{PQ}, two path-based reasoning benchmarks where reasoning is conducted directly over the provided KGs without subgraph extraction. For each dataset, we derive a held-out calibration set by randomly sampling 10\% from the original training set provided. Dataset statistics are summarized in Table~\ref{tab:dataset_stats}.

\begin{table}[h]
\centering
\caption{Dataset statistics and subgraph properties.}
\setlength{\tabcolsep}{0.8mm}
\resizebox{\linewidth}{!}{
\begin{tabular}{lccccc}
\toprule
Dataset & Train & Calibration & Test &  \#Nodes & \#Edges \\
\midrule
WebQSP & 2,543 & 283 & 1,628 & 1,378 (Avg.) & 4,180 (Avg.)\\
CWQ    & 24,875 & 2,764 & 3,531 & 1,259 (Avg.)& 3,924 (Avg.)\\
PQ    & 3737 & 416 & 1044 & 2,256 & 4,251 \\
PQL    & 678 & 76 & 192 & 6,505 & 5,597 \\
\bottomrule
\end{tabular}
}
\label{tab:dataset_stats}
\end{table}
\vspace{-8pt}

\paragraph{Baseline Methods.} We compare our method against LLM conformal baselines and KG reasoning baselines under the same evaluation protocol, including: \textbf{CLM}~\citep{quach2024conformallanguagemodeling}, a logit-based CP method that applies the general risk-control framework directly to the output distribution of LLMs; \textbf{LoFreeCP}~\citep{Su2024}: a CP variant designed to improve calibration under low-frequency events in language models; and \textbf{UaG}~\citep{Ni_Wang_Cheng_Blasch_Derr_2025}: a state-of-the-art KGQA framework that applies stage-wise CP with a Learn-Then-Test strategy to control reasoning errors across components.

\input{Tables/main_eval}
\paragraph{Evaluation Protocol and Metrics.}
Given a user-specified risk level $\alpha \in (0,1)$, CP constructs a prediction set with coverage guaranteed with probability at least $1-\alpha$. We evaluate all methods across risk levels $\alpha \in \{ 0.3, 0.4 \ldots, 0.7, 0.8\}$ using the following metrics:
\vspace{-6pt}
\begin{itemize}[leftmargin=*,itemsep=-1pt]
\item \textbf{ECR (Empirical Coverage Rate)} measures the proportion of test cases for which the prediction set includes at least one correct answer:
$\text{ECR} = \frac{1}{|\mathcal{D}_{\text{test}}|} \sum_{q \in \mathcal{D}_{\text{test}}} \mathbf{1}\left[\mathcal{A}_q \cap \mathcal{Y}_q \right].$
Higher values indicate more reliable coverage.

\item \textbf{APSS (Average Prediction Set Size)} quantifies the efficiency of uncertainty quantification by measuring the average cardinality of prediction sets:
$\text{APSS} = \frac{1}{|\mathcal{D}_{\text{test}}|} \sum_{q \in \mathcal{D}_{\text{test}}} |\mathcal{A}_q|.$
Smaller values indicate more efficient and discriminative prediction sets. 
\item \textbf{Coverage Efficiency} measures how efficiently a method achieves coverage guarantees by capturing the trade-off between coverage reliability and 
prediction set compactness. It is defined as the \textbf{ECR/APSS ratio}~\cite{lin_CE}. Higher values indicate that the method achieves adequate coverage with more compact prediction sets. 

\end{itemize}

For conformal risk control prediction, the desired objective is to minimize the prediction set size while satisfying the target coverage~\citep{cp_intro}.
Accordingly, we evaluate ECR, APSS, and Coverage Efficiency achieved by different methods across various risk levels. All implementation details are provided in Appendix~\ref{app:implementation}.


\subsection{Main Results}
Table~\ref{tab:full_results} reports the performance of all methods on WebQSP, CWQ, PQ, and PQL across various risk levels, where entries marked ``-'' indicate failure to satisfy the target coverage guarantee. Overall, CPR generally achieves the highest coverage rate while maintaining substantially more compact prediction sets as compared with UaG and LLM-based conformal baselines across all datasets.
 
On WebQSP, CPR achieves both higher coverage and smaller prediction set sizes than UaG. At a risk level of $\alpha=0.5$, CPR achieves an ECR of 61.4\% with an APSS of 7.14, whereas UaG yields a lower ECR of 51.8\% despite a much larger APSS of 14.29.
Moreover, CPR effectively mitigates the prediction set explosion exhibited by UaG at lower risk levels: At $\alpha=0.3$, CPR achieves a higher ECR 76.8\% compared to UaG's 72.3\%, while dramatically reducing APSS from 98.81 to 17.77.
 
The performance gains are more pronounced on CWQ, a more challenging dataset due to its longer multi-hop reasoning chains. At $\alpha=0.6$, CPR achieves an ECR of 50.2\% with an APSS of 8.35. In contrast, even the best-performing baseline, UaG, achieves only 42.1\% ECR with an excessive APSS of 120.20, and fails to satisfy coverage guarantees for $\alpha \leq 0.5$. In addition, CLM and LoFreeCP perform substantially worse, failing for $\alpha \leq 0.6$ as their ECR falls below the target level $1-\alpha$. 
 

CPR also exhibits strong performance on PQ and PQL, where reasoning is conducted directly over the KG provided without subgraph extraction, posing additional challenges from larger search spaces and denser graph structures. On PQ at $\alpha=0.5$, CPR achieves an ECR of 55.1\% with an APSS of 2.95, outperforming UaG, which attains an ECR of 54.6\% with a comparable prediction set size. At lower risk levels, the advantage grows: at $\alpha=0.4$, CPR achieves 61.3\% ECR with an APSS of 6.80, while UaG fails to satisfy the coverage guarantee. CLM and LoFreeCP similarly fail for $\alpha \leq 0.6$. On PQL, CPR achieves 53.1\% ECR with an APSS of 2.65 at $\alpha=0.5$, compared to UaG's 52.1\%. At $\alpha=0.3$, CPR attains 71.3\% ECR, whereas all baselines fail. Although UaG has a slightly smaller APSS at higher risk levels $\alpha \in \{0.7,0.8\}$, CPR consistently maintains superior empirical coverage across all settings.

\subsection{Ablation Study}
We further conduct ablation studies to analyze the contribution of each component in CPR. Table~\ref{tab:full_results} compares the full CPR model against two ablated variants: 
\textbf{CPR (w/o PUCT \& RCVNet)}, denoted as \textbf{CPR Abl.1}, which removes both the PUCT trajectory collector and RCVNet and relies solely on TreeG retrieval with semantic value scores; and \textbf{CPR (w/o PUCT)}, denoted as \textbf{CPR Abl.2}, which  retains RCVNet and TreeG retrieval but replaces PUCT-guided exploration with a weaker strategy to generate training trajectories: negative paths are randomly sampled, while positive paths are obtained through semantic similarity search with fallback to ground-truth paths when no valid answers are retrieved.

 
On WebQSP at $\alpha=0.5$, TreeG retrieval alone achieves an ECR of 57.3\%. Adding RCVNet improves ECR to 61.0\%, and incorporating PUCT-guided trajectory collection further increases ECR to 61.4\%. These gains highlight the complementary roles of the three components: TreeG improves answer reachability, RCVNet enhances path score discrimination, and PUCT provides informative training trajectories that enable RCVNet to learn more discriminative path scores. Similar trends are observed on PQ and PQL: on PQL at $\alpha=0.5$, 
ECR improves from 51.4\% to 52.4\% with RCVNet and further to 53.1\% with PUCT guidance. These consistent gains demonstrate the effectiveness of each component across different graph reasoning settings.


\subsection{Coverage Efficiency Analysis}
Table~\ref{tab:full_results} shows that CPR consistently achieves the highest Coverage Efficiency across all datasets and risk levels, demonstrating a superior trade-off between coverage reliability and prediction set compactness. On WebQSP at $\alpha=0.5$, CPR achieves a Coverage Efficiency of 0.086, outperforming UaG by 140\% (0.036). CLM and LoFreeCP achieve a lower efficiency of 0.035 and 0.049, respectively.
This performance gap increases significantly on CWQ: at $\alpha=0.6$, CPR achieves 0.060, a 15$\times$ improvement over UaG at 0.004.
Notably, UaG fails to provide valid coverage for $\alpha \leq 0.5$ on CWQ, while CLM and LoFreeCP fail at $\alpha \leq 0.6$. On PQ and PQL, CPR also achieves the highest Coverage Efficiency in most settings. On PQ at $\alpha=0.6$, CPR achieves a Coverage Efficiency of 0.323, compared to 0.265 for UaG. On PQL at $\alpha=0.8$, CPR achieves the highest efficiency of 0.405 among all methods. While UaG occasionally achieves a smaller APSS at high risk levels on these datasets, its lower ECR leads to inferior Coverage Efficiency overall, confirming that CPR provides a more favorable reliability-specificity trade-off.
 
The ablation results further highlight the contribution of each component.
At a higher risk level $\alpha=0.8$ on WebQSP, CPR achieves a Coverage Efficiency of 0.314. Ablating PUCT reduces this score to 0.266, and further removing RCVNet causes a substantial decline to 0.200. These results indicate that PUCT-guided exploration provides high-quality training trajectories, enabling RCVNet to learn discriminative scores for producing tighter prediction sets. 
Overall, CPR achieves superior Coverage Efficiency while satisfying valid coverage guarantees across all datasets.

\subsection{Scalability Analysis on Large-Scale KGs}
\label{sec:scalability}

To assess the scalability of CPR, we evaluate on vastly expanded subgraphs constructed from WebQSP, where all query-relevant neighborhoods across each data split are aggregated into unified large-scale graphs. The statistics of the constructed large-scale subgraphs are summarized in Table~\ref{tab:webqsp_expanded_stats}, which substantially enlarges the search space that CPR must navigate.
\begin{table}[h]
\small
\centering
\caption{Statistics of the expanded WebQSP subgraphs}
\label{tab:webqsp_expanded_stats}
\setlength{\tabcolsep}{3mm}
\begin{tabular}{lrr}
\toprule
Dataset Split & Number of Nodes & Number of Edges \\
\midrule
Training      & 964,293         & 2,731,989       \\
Calibration   & 244,588         & 704,728         \\
Testing       & 778,692         & 2,173,879       \\
\bottomrule
\end{tabular}
\end{table}
\vspace{-8pt}

As reported in Table~\ref{tab:scalability}, CPR maintains valid coverage guarantees across all risk levels on expanded graphs. At $\alpha=0.5$, CPR achieves an ECR of 55.3\% with only a 2.96 increase in APSS despite a $\sim$600-fold graph expansion. Notably, CPR on expanded graphs still surpasses UaG on the original subgraphs in Coverage Efficiency across all risk levels. 
Although larger graphs incur higher APSS due to increased path density, Coverage Efficiency rises monotonically from 2.56\% at $\alpha=0.3$ to 12.3\% at $\alpha=0.8$, confirming that CPR scales effectively with graph complexity.

\begin{table}[h]
\centering
\caption{Scalability evaluation of CPR on expanded WebQSP subgraphs across risk levels. UaG on the original subgraph setting is included as a reference baseline. ``Original'' refers to query-specific subgraphs averaging $\sim$1,378 nodes; ``Expanded'' refers to unified subgraphs with up to 778k nodes.}
\setlength{\tabcolsep}{1mm}
\resizebox{\linewidth}{!}{
\begin{tabular}{llcccccc}
\toprule
\multirow{2}{*}{Method} & \multirow{2}{*}{Graph Scale} & \multicolumn{6}{c}{$\alpha$} \\
\cmidrule(lr){3-8}
& & 0.3 & 0.4 & 0.5 & 0.6 & 0.7 & 0.8 \\
\midrule
\multicolumn{8}{l}{\textit{ECR (\%) $\uparrow$}} \\
UaG & Original ($\sim$1.3k nodes) & 72.3 & 60.4 & 51.8 & 41.8 & 37.4 & 25.6 \\
CPR & Original ($\sim$1.3k nodes) & \textbf{76.8} & \textbf{67.3} & \textbf{61.4} & \textbf{58.3} & \textbf{55.1} & \textbf{50.8} \\
CPR & Expanded (778k nodes)       & 70.3 & 61.1 & 55.3 & 52.6 & 48.9 & 44.7 \\
\midrule
\multicolumn{8}{l}{\textit{APSS $\downarrow$}} \\
UaG & Original ($\sim$1.3k nodes) & 98.81 & 32.52 & 14.29 & 8.23 & 5.58 & 1.67 \\
CPR & Original ($\sim$1.3k nodes) & \textbf{17.77} & \textbf{10.10} & \textbf{7.14} & \textbf{5.45} & \textbf{3.01} & \textbf{1.62} \\
CPR & Expanded (778k nodes)       & 27.45 & 13.58 & 10.10 & 7.83 & 5.73 & 3.63 \\
\midrule
\multicolumn{8}{l}{\textit{Coverage Efficiency (\%) $\uparrow$}} \\
UaG & Original ($\sim$1.3k nodes) & 0.73 & 1.86 & 3.63 & 5.08 & 6.70 & 15.3 \\
CPR & Original ($\sim$1.3k nodes) & \textbf{4.32} & \textbf{6.66} & \textbf{8.60} & \textbf{10.7} & \textbf{18.3} & \textbf{31.4} \\
CPR & Expanded (778k nodes)       & 2.56 & 4.50 & 5.48 & 6.72 & 8.53 & 12.3 \\
\bottomrule
\end{tabular}
}
\label{tab:scalability}
\end{table}
\vspace{-8pt}

\subsection{TreeG Efficiency and Sensitivity Analysis}

\paragraph{Inference Efficiency.}
TreeG performs deterministic tree expansion with bounded complexity. At each hop, it expands at most $A$ active paths over the top-$B$ relations, globally ranking the resulting $O(A \cdot B)$ candidates to retain the top-$A$. Over $H$ hops, the total complexity is $O(H \cdot A \cdot B \cdot \log(AB))$, linear in reasoning depth and quadratic in search budget. PUCT-based exploration is used only during offline training to provide high-quality trajectory supervision for RCVNet (See Appendix~\ref{app:rollout} for a PUCT rollout study). At inference, CPR relies solely on the efficient TreeG expansion with the trained RCVNet.

\paragraph{TreeG Search Budget Analysis.}
To study the effect of TreeG search budget, we vary the branch-out size $B$ and the active-set size $A$ on WebQSP as a case study. Figure~\ref{fig:budget} reports results at $\alpha=0.5$, a moderate risk level, as TreeG retrieval is largely insensitive to the choice of $\alpha$. Overall, increasing $B$ and $A$ improves ECR but also increases APSS. Specifically, ECR grows from 0.582 to 0.629 as the search budget scales from $(4,8)$ to $(64,64)$, while APSS increases from 3.09 to 9.02, with diminishing marginal returns. Notably, the configuration $(16,8)$ yields lower ECR than $(8,8)$ despite comparable APSS, suggesting that excessive local branching without sufficient global capacity can degrade coverage. Based on this analysis, we set $(B,A)=(32,32)$ in our experiments as it offers a favorable trade-off, achieving an ECR of 0.614 with an APSS of 6.79. 
\begin{figure}[h]
\centering
\includegraphics[width=1.0\linewidth]{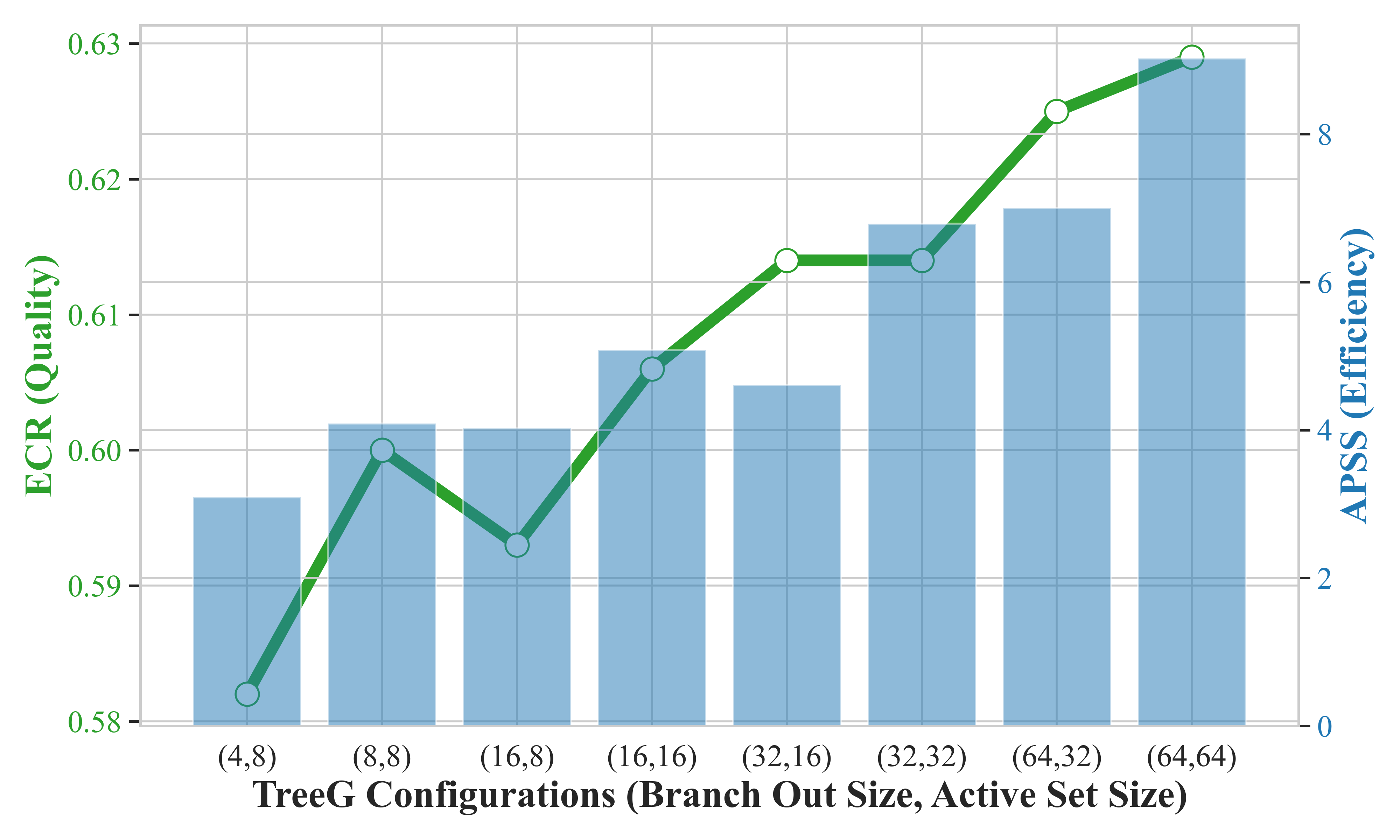}
\caption{TreeG budget study on WebQSP at risk level $\alpha=0.5$.}
\label{fig:budget}\vskip -0.1in
\end{figure}


\paragraph{Sensitivity to LLM Backbones.}
To assess the sensitivity of CPR to the quality of LLM-generated relational hints, we vary the LLM backbone used for TreeG retrieval from Qwen3-8B down to Qwen3-0.6B during calibration and inference. Details are provided in Appendix~\ref{app:llm_hint}. 
Our results show that CPR maintains valid coverage across all backbones, while stronger LLMs mainly improve prediction set compactness. Crucially, on WebQSP at $\alpha = 0.8$, CPR without LLM-generated hints still achieves an ECR of 42.0\% with an APSS of 2.54, substantially exceeding the required coverage threshold ($1 - \alpha = 20\%$) and outperforming UaG equipped with the strongest Qwen3-8B hints (25.6\% ECR).

\section{Conclusion}
This paper presents CPR, a novel framework for trustworthy KGQA with statistical coverage guarantees. By performing path-level conformal calibration that leverages query exchangeability, and introducing RCVNet trained via PUCT-curated trajectories, CPR achieves valid coverage guarantees with substantially more compact prediction sets. Extensive experiments validate the effectiveness of CPR, demonstrating the potential of extending conformal prediction to complex multi-hop KGQA tasks with valid coverage guarantees.

\newpage
\section*{Acknowledgements}
We sincerely thank the anonymous reviewers for their insightful comments and constructive feedback, which helped us clarify several key points, improve the presentation of our method, and strengthen the experimental evaluation. Their suggestions were invaluable in preparing the camera-ready version of this work.

\section*{Impact Statement}
This paper presents Conformal Path Reasoning (CPR), a framework for trustworthy KGQA with statistical coverage guarantees. CPR may benefit high-stakes applications such as healthcare and finance, where users need calibrated confidence in retrieved answers. We note that coverage guarantees are statistical in nature and depend on the quality of the underlying KG. In safety-critical domains, CPR should serve as a decision-support tool rather than a replacement for expert judgment. Beyond these considerations, we foresee no specific negative social consequences that must be highlighted.

\bibliography{example_paper}
\bibliographystyle{icml2026}


\clearpage
\appendix
\section{Proof of Theorem~\ref{thm:qecp}}
\label{app:proof}

We provide a detailed proof of the coverage guarantee for Query-Exchangeable Conformal Prediction in Section~\ref{sec:method_QECP}.

\begin{proof}[Proof of Theorem~\ref{thm:qecp}]
We structure the proof in six steps.

\paragraph{Step 1: Establishing Query-Level Exchangeability.}
Let $q_1, \ldots, q_n$ denote the calibration queries and $q_{n+1}$ the test query. Since they are all drawn i.i.d.\ from the same data source, by definition of exchangeability, for any permutation $\sigma: \{1,\ldots,n+1\} \to \{1,\ldots,n+1\}$:
\[
(q_1, \ldots, q_n, q_{n+1}) \stackrel{d}{=} (q_{\sigma(1)}, \ldots, q_{\sigma(n)}, q_{\sigma(n+1)}).
\]
This holds because the joint distribution of i.i.d.\ random variables depends only on the product of marginal distributions, which is invariant to index ordering.

\paragraph{Step 2: Propagating Exchangeability to Query-Path Tuples.}
For a given KG $\mathcal{G}$ and a scoring function $V_{\bm\theta}'(p)$, define a deterministic retrieval mapping $\mathcal{R}: q \mapsto \mathcal{P}_q$. Since $\mathcal{R}$ is deterministic and the ground-truth answer set $\mathcal{Y}_q$ is also deterministically given by the data source, the tuple $(q_i, \mathcal{P}_{q_i}, \mathcal{Y}_{q_i})$ is a deterministic function of $q_i$.

We invoke the following lemma:
\begin{lemma}
\label{lem:deterministic}
If $(Z_1, \ldots, Z_{n+1})$ is exchangeable and $f$ is a deterministic function, then $(f(Z_1), \ldots, f(Z_{n+1}))$ is exchangeable.
\end{lemma}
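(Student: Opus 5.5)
The plan is to argue directly from the definition of exchangeability as permutation-invariance of the joint law, by pushing the joint distribution forward through the coordinatewise map. Fix a permutation $\sigma$ of $\{1,\ldots,n+1\}$. Let $f:\mathcal{Z}\to\mathcal{S}$ be measurable, and define the product map $F:\mathcal{Z}^{n+1}\to\mathcal{S}^{n+1}$ by $F(z_1,\ldots,z_{n+1})=(f(z_1),\ldots,f(z_{n+1}))$. Let $\Pi_\sigma$ denote the coordinate-permutation map $(z_1,\ldots,z_{n+1})\mapsto(z_{\sigma(1)},\ldots,z_{\sigma(n+1)})$, defined on either product space.

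The key observation is that $F$ commutes with permutations: $F\circ\Pi_\sigma=\Pi_\sigma\circ F$. This holds because the \emph{same} $f$ is applied to every coordinate, and $f(Z_i)$ depends only on $Z_i$. Then, for any measurable $B\subseteq\mathcal{S}^{n+1}$,
\begin{equation*}
\mathbb{P}\big(\Pi_\sigma F(Z)\in B\big)=\mathbb{P}\big(\Pi_\sigma Z\in F^{-1}(B)\big)=\mathbb{P}\big(Z\in F^{-1}(B)\big)=\mathbb{P}\big(F(Z)\in B\big).
\end{equation*}
The middle equality is exactly the exchangeability of $(Z_1,\ldots,Z_{n+1})$, applied to the measurable set $F^{-1}(B)$. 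Since $B$ and $\sigma$ are arbitrary, $(f(Z_1),\ldots,f(Z_{n+1}))\stackrel{d}{=}(f(Z_{\sigma(1)}),\ldots,f(Z_{\sigma(n+1)}))$, which is the claim.

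The only step needing care is measurability: $F^{-1}(B)$ must be an event. I would state explicitly that $f$ is measurable, so that $F$ is measurable with respect to the product $\sigma$-algebras, since each coordinate of $F$ is measurable. For our application this means the TreeG retrieval, RCVNet scoring, and min-aggregation form a fixed map into $\mathbb{R}\cup\{+\infty\}$. Taking the extended reals with their Borel $\sigma$-algebra handles the $+\infty$ case of Eq.~\eqref{eq:qecp_score}.

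I expect the main conceptual obstacle to be the requirement that $f$ be a single fixed function of one sample, not depending on the other $Z_j$ or on data-dependent quantities. I would remark on this because it is precisely what fails in the hop-level scheme of Section~\ref{sec:challenge}: there the score depends on a threshold computed from all calibration queries, so the commutation $F\circ\Pi_\sigma=\Pi_\sigma\circ F$ breaks. In CPR, $\bm\theta$, the Beta priors, and the deterministic LLM hints are all fixed on training data before calibration, so $f$ is fixed and the lemma applies.
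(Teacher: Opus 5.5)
Your proposal is correct and takes the paper's approach: the paper also pulls the event back through the coordinatewise map, writing it as $f^{-1}(\mathcal{A})$, and applies exchangeability of $(Z_1,\ldots,Z_{n+1})$ to that preimage. You additionally make explicit two points the paper leaves implicit: that using the same $f$ in every coordinate gives $F\circ\Pi_\sigma=\Pi_\sigma\circ F$, and that $f$ must be measurable.
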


\begin{proof}[Proof of Lemma~\ref{lem:deterministic}]
For any permutation $\sigma$ and measurable set $\mathcal{A}$:
\begin{align*}
&\mathbb{P}\left((f(Z_{\sigma(1)}\right), \ldots, f(Z_{\sigma(n+1)})) \in \mathcal{A}) \\
&= \mathbb{P}((Z_{\sigma(1)}, \ldots, Z_{\sigma(n+1)}) \in f^{-1}(\mathcal{A})) \\
&= \mathbb{P}((Z_1, \ldots, Z_{n+1}) \in f^{-1}(\mathcal{A})) \ \text{(by exchangeability of } Z_i\text{)} \\
&= \mathbb{P}((f(Z_1), \ldots, f(Z_{n+1})) \in \mathcal{A}).
\end{align*}
\end{proof}

Applying Lemma~\ref{lem:deterministic}, we conclude that $\{(q_i, \mathcal{P}_{q_i}, \mathcal{Y}_{q_i})\}_{i=1}^{n+1}$ inherits exchangeability from the queries.

\paragraph{Step 3: Exchangeability of Nonconformity Scores.}
Recall the nonconformity score defined in Eq.~\eqref{eq:qecp_score}:
\[
S(q) = \begin{cases}
\min_{p \in \mathcal{P}^*_q} V_{\bm\theta}'(p), & \text{if } \mathcal{P}^*_q \neq \emptyset, \\
+\infty, & \text{if } \mathcal{P}^*_q = \emptyset,
\end{cases}
\]
where $\mathcal{P}^*_q = \{p \in \mathcal{P}_q : \pi(p) \in \mathcal{Y}_q\}$ is the set of paths terminating at correct answers.

Since $S(\cdot)$ is a deterministic function of $(q, \mathcal{P}_q, \mathcal{Y}_q)$, applying Lemma~\ref{lem:deterministic} again yields that $\{S(q_1), \ldots, S(q_n), S(q_{n+1})\}$ is exchangeable.

\paragraph{Step 4: Standard Split Conformal Quantile Argument.}
This step constitutes the core of the proof. Let $S_i := S(q_i)$ for $i \in \{1, \ldots, n+1\}$. Following the standard split conformal framework~\citep{cp_intro}, we define $\tau_\alpha$ as the $\lceil (n+1)(1-\alpha) \rceil$-th smallest value among the \emph{augmented} calibration scores $\{S_1, \ldots, S_n, +\infty\}$. Formally, let $k = \lceil (n+1)(1-\alpha) \rceil$:
\[
\tau_\alpha =
\begin{cases}
S_{(k)}, & \text{if } k \le n, \\
+\infty, & \text{if } k > n,
\end{cases}
\]
where $S_{(k)}$ denotes the $k$-th order statistic of $\{S_1, \ldots, S_n\}$. The augmentation with $+\infty$ ensures the threshold is always well-defined, even when $\lceil (n+1)(1-\alpha) \rceil > n$ (which occurs only for very small $\alpha$ relative to $n$).

\begin{lemma}[Coverage under Exchangeability]
\label{lem:coverage}
If $(S_1, \ldots, S_n, S_{n+1})$ is exchangeable, then
\[
\mathbb{P}(S_{n+1} \le \tau_\alpha) \ge 1 - \alpha.
\]
This holds regardless of whether ties exist among the scores.
\end{lemma}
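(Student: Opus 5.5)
The plan is the standard rank argument, written so that it tolerates ties and the value $+\infty$. Throughout, set $k=\lceil (n+1)(1-\alpha)\rceil$, and dispose of the case $k>n$ first: there $\tau_\alpha=+\infty$ and $S_{n+1}\le\tau_\alpha$ holds surely, since scores take values in the extended reals. So assume $k\le n$.

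\textbf{Step 1: replace the event by a rank event.} I would show
\[
\{S_{n+1}\le \tau_\alpha\}=\{S_{n+1}\le T_{(k)}\},
\]
where $T_{(k)}$ is the $k$-th smallest of the full multiset $\{S_1,\dots,S_{n+1}\}$. This is a deterministic check on order statistics:
\begin{itemize}
\item If $S_{n+1}\le S_{(k)}$, then at least $k$ of the full values, namely $S_{(1)},\dots,S_{(k)}$, are $\ge S_{n+1}$ up to position $k$. Inserting $S_{n+1}$ below $S_{(k)}$ gives $T_{(k)}=\max(S_{(k-1)},S_{n+1})\ge S_{n+1}$.
\item Conversely, if $S_{n+1}>S_{(k)}$, inserting it leaves the first $k$ order statistics unchanged. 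Then $T_{(k)}=S_{(k)}<S_{n+1}$.
\end{itemize}
The advantage is that the right-hand event is symmetric in all $n+1$ scores apart from which index is singled out.

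\textbf{Step 2: exchangeability averaging.} For each $i$, let $E_i=\{S_i\le T_{(k)}\}$. The multiset, and hence $T_{(k)}$, is invariant under permutations. By exchangeability, applying the transposition of $i$ and $n+1$ therefore gives $\mathbb P(E_i)=\mathbb P(E_{n+1})$. Consequently
\[
\mathbb P(S_{n+1}\le T_{(k)})=\frac{1}{n+1}\,\mathbb E\Big[\sum_{i=1}^{n+1}\mathbf 1\{S_i\le T_{(k)}\}\Big].
\]

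\textbf{Step 3: the counting bound.} Deterministically, at least $k$ indices satisfy $S_i\le T_{(k)}$, namely those realizing the first $k$ order statistics. Ties only increase the count, which is why the lemma holds regardless of ties. Hence the probability is at least
\[
\frac{k}{n+1}\ge 1-\alpha.
\]

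\textbf{Main obstacle.} The only delicate point is Step 1 under ties and infinite scores. I would handle it by working directly with order statistics of multisets rather than with strict ranks, and I would avoid any continuity or no-ties assumption.

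\textbf{Deducing the theorem.} Suppose $S(q)\le\hat\tau_\alpha<\infty$. Then $\mathcal P^*_q\neq\emptyset$, and the minimizing path $p$ has $V'_{\bm\theta}(p)\le\hat\tau_\alpha$. So $p\in\widehat{\mathcal P}_\alpha(q)\cap\mathcal P^*_q$, and $\pi(p)\in\mathcal A_q\cap\mathcal Y_q$.

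The case $\hat\tau_\alpha=+\infty$ needs a convention: either the set includes all retrieved paths, or coverage is vacuous when $\mathcal P^*_q=\emptyset$. This is a caveat worth noting, because retrieval misses give $S=+\infty$. Those misses are nonetheless counted correctly in the quantile, so validity holds as long as $\hat\tau_\alpha$ is taken to be the $k$-th order statistic, which matches the paper's quantile $\lceil (n+1)(1-\alpha)\rceil/n$.
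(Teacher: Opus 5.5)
Your proof is correct, and it differs from the paper's at exactly the step that matters.

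The paper runs the averaging argument directly on $I_i=\mathbf{1}\{S_i\le\tau_\alpha\}$, where $\tau_\alpha$ is the $k$-th order statistic of $\{S_1,\dots,S_n,+\infty\}$. It then asserts $\mathbb{P}(I_{n+1}=1)=\mathbb{P}(I_i=1)$ ``by exchangeability.'' But $\tau_\alpha$ is built from the calibration scores alone, so it is not invariant under transpositions involving index $n+1$. The equality fails in general. For i.i.d.\ continuous scores and $k\le n$, $\mathbb{P}(S_i\le S_{(k)})=k/n$ for $i\le n$, but $\mathbb{P}(S_{n+1}\le S_{(k)})=k/(n+1)$. So the paper's identity $\mathbb{P}(S_{n+1}\le\tau_\alpha)=\mathbb{E}[M]/(n+1)$ is false, even though its conclusion is true.

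Your Step 1 supplies the missing ingredient. Replacing $\tau_\alpha$ by the $k$-th order statistic $T_{(k)}$ of all $n+1$ scores gives a permutation-invariant threshold. The transposition argument of Step 2 is then legitimate, and the counting bound of Step 3 handles ties and $+\infty$ for free. The paper's route is shorter but does not establish the lemma as written; yours costs one deterministic order-statistics fact and is rigorous.

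Two small points remain. First, the justification in your first bullet is garbled. The values $S_{(1)},\dots,S_{(k)}$ need not all be $\ge S_{n+1}$, and $S_{(k-1)}$ needs a convention when $k=1$. A cleaner proof of Step 1 is a chain of equivalences: $T_{(k)}<S_{n+1}$ iff at least $k$ of the $n+1$ values lie strictly below $S_{n+1}$. Since $S_{n+1}$ is not below itself, this holds iff at least $k$ of $S_1,\dots,S_n$ do, i.e.\ iff $S_{(k)}<S_{n+1}$.

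Second, your closing caveat is right, and it concerns the paper's Step 5 rather than this lemma. Whenever $\tau_\alpha=+\infty$ (if $k>n$, or if $S_{(k)}=+\infty$ because of retrieval misses), the event $\{S_{n+1}\le\tau_\alpha\}$ contains $S_{n+1}=+\infty$. So the paper's claim ``$+\infty>\tau_\alpha$'' fails there. No convention on the prediction set can repair this, since when $\mathcal{P}^*_q=\emptyset$ there is no correct path to include.
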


\begin{proof}[Proof of Lemma~\ref{lem:coverage}]
Consider the augmented sequence $(\tilde{S}_1, \ldots, \tilde{S}_{n+1}) := (S_1, \ldots, S_n, +\infty)$. By construction, $\tau_\alpha = \tilde{S}_{(k)}$ where $k = \lceil (n+1)(1-\alpha) \rceil$.

For the exchangeable sequence $(S_1, \ldots, S_{n+1})$, define the indicator $I_i = \mathbf{1}\{S_i \le \tau_\alpha\}$ for each $i \in \{1, \ldots, n+1\}$. By exchangeability, the joint distribution is invariant under permutations, which implies:
\[
\mathbb{P}(I_{n+1} = 1) = \mathbb{P}(I_i = 1) \quad \text{for all } i \in \{1, \ldots, n+1\}.
\]

Let $M = \sum_{i=1}^{n+1} I_i = |\{i : S_i \le \tau_\alpha\}|$ denote the total number of scores at or below the threshold. By construction of $\tau_\alpha$ from the augmented set, at least $k = \lceil (n+1)(1-\alpha) \rceil$ elements of $\{S_1, \ldots, S_n, +\infty\}$ satisfy $\tilde{S}_i \le \tau_\alpha$. Since $+\infty \le \tau_\alpha$ only when $\tau_\alpha = +\infty$, and in that case all $S_i \le +\infty$, we have $M \ge k$ in all cases.

By symmetry:
\[
\mathbb{P}(S_{n+1} \le \tau_\alpha) = \mathbb{E}[I_{n+1}] = \frac{1}{n+1} \mathbb{E}\left[\sum_{i=1}^{n+1} I_i\right] = \frac{\mathbb{E}[M]}{n+1}.
\]

Since $M \ge \lceil (n+1)(1-\alpha) \rceil$ almost surely:
\[
\mathbb{P}(S_{n+1} \le \tau_\alpha) = \frac{\mathbb{E}[M]}{n+1} \ge \frac{\lceil (n+1)(1-\alpha) \rceil}{n+1} \ge 1 - \alpha.
\]
\end{proof}

Therefore:
\begin{equation}
\mathbb{P}(S(q_{n+1}) \le \tau_\alpha) \ge 1-\alpha.
\label{eq:score_coverage}
\end{equation}

\paragraph{Step 5: From Score Coverage to Path Coverage.}
We show that the event $\{S(q_{n+1}) \le \tau_\alpha\}$ implies $\{\hat{\mathcal{C}}(q_{n+1}) \cap \mathcal{P}^*_{q_{n+1}} \neq \emptyset\}$.

When $S(q_{n+1}) \le \tau_\alpha$ holds:
\begin{enumerate}
    \item By definition of $S(\cdot)$, we must have $\mathcal{P}^*_{q_{n+1}} \neq \emptyset$ (otherwise $S(q_{n+1}) = +\infty > \tau_\alpha$).
    \item There exists $p^* \in \mathcal{P}^*_{q_{n+1}}$ such that $V_{\bm\theta}'(p) = S(q_{n+1}) \le \tau_\alpha$.
    \item By the prediction set definition: $\hat{\mathcal{C}}(q) = \{p \in \mathcal{P}_q : V_{\bm\theta}'(p) \le \tau_\alpha\}$, we have $p^* \in \hat{\mathcal{C}}(q_{n+1})$.
    \item Since $p^* \in \mathcal{P}^*_{q_{n+1}}$ (terminates at a correct answer) and $p^* \in \hat{\mathcal{C}}(q_{n+1})$, we conclude $p^* \in \hat{\mathcal{C}}(q_{n+1}) \cap \mathcal{P}^*_{q_{n+1}}$.
\end{enumerate}

Thus:
\[
\{S(q_{n+1}) \le \tau_\alpha\} \subseteq \{\hat{\mathcal{C}}(q_{n+1}) \cap \mathcal{P}^*_{q_{n+1}} \neq \emptyset\}.
\]

Taking probabilities and applying Eq.~\eqref{eq:score_coverage}:
\[
\mathbb{P}(\hat{\mathcal{C}}(q_{n+1}) \cap \mathcal{P}^*_{q_{n+1}} \neq \emptyset) \ge \mathbb{P}(S(q_{n+1}) \le \tau_\alpha) \ge 1-\alpha.
\]

\paragraph{Step 6: Extension to Answer Entity Coverage.}
From Step 5, we have $p^* \in \mathcal{P}^*_{q_{n+1}}$, which by definition means $\pi(p^*) \in \mathcal{Y}_{q_{n+1}}$. Simultaneously, $p^* \in \hat{\mathcal{C}}(q_{n+1})$ implies $\pi(p^*) \in \mathcal{A}_{q_{n+1}} = \{\pi(p) : p \in \hat{\mathcal{C}}(q_{n+1})\}$.

Therefore $\pi(p^*) \in \mathcal{A}_{q_{n+1}} \cap \mathcal{Y}_{q_{n+1}}$, yielding:
\[
\mathbb{P}(\mathcal{A}_{q_{n+1}} \cap \mathcal{Y}_{q_{n+1}} \neq \emptyset) \ge 1-\alpha.
\]

This completes the proof.
\end{proof}

\section{Algorithm Pseudocode}
\label{app:algorithm}

\begin{algorithm}[h]
\caption{TreeG Retrieval}
\label{alg:treeg}
\begin{algorithmic}[1]
\renewcommand{\algorithmicrequire}{\textbf{Input:}} 
\renewcommand{\algorithmicensure}{\textbf{Output:}} 
\REQUIRE Query $q$, KG $\mathcal{G}$, Maximum hop $H$, RCVNet $V_{\bm\theta}(\cdot)$, Active set size $A$, Branch-out size $B$
\ENSURE Candidate Path Set $\mathcal{P}$

\STATE $\mathcal{H} \gets \textsc{LLM}(q)$ \hfill $\triangleright$ Generate relation hints
\STATE $\mathcal{P} \gets \{(e) : e \in \mathcal{E}_q\}$ \hfill $\triangleright$ Initialize with topic entities
\FOR{$h = 1, \ldots, H$}
    \STATE $\mathcal{P}' \gets \emptyset$
    \FORALL{$p \in \mathcal{P}$}
        \STATE $\mathcal{R}_p \gets \text{top-}B$ relations by Eq.~\ref{eq:llm_hint}
        \STATE $\mathcal{P}' \gets \mathcal{P}' \cup \{p \circ (r, e') : r \in \mathcal{R}_p, (e, r, e') \in \mathcal{G}\}$

    \ENDFOR
    \STATE $\mathcal{P} \gets \text{top-}A$ paths from $\mathcal{P}'$ by $V_{\bm\theta}'(p)$ \hfill 
\ENDFOR
\STATE \textbf{return } $\mathcal{P}$

\end{algorithmic}
\end{algorithm}

Algorithm~\ref{alg:treeg} details the TreeG retrieval algorithm used in both Phase 2 (calibration) and Phase 3 (inference). Given a query, TreeG first generates relational hints using an LLM and initializes paths from the topic entities. At each hop, it expands the current active paths by selecting the top-$B$ relations according to hint-adjusted scores $V_{\bm\theta}'(p)$ (Eq.~\ref{eq:llm_hint}), then globally ranks all candidate paths by $V_{\bm\theta}'(p)$ and retains the top-$A$ for expansion at the next hop.

\begin{figure*}[t]
    \centering
\includegraphics[width=\linewidth]{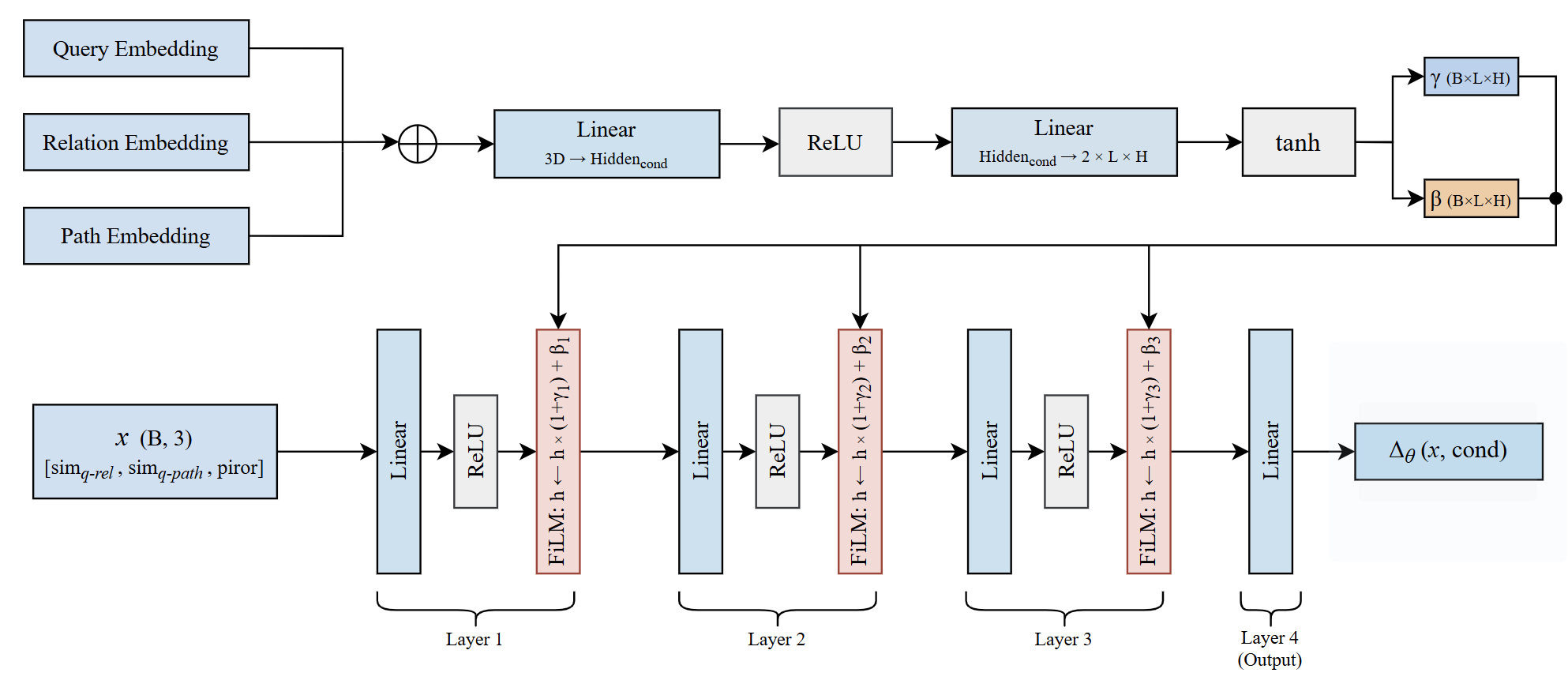}
    \caption{Architecture of RCVNet.}
    \label{fig:rcvnet}
\end{figure*}


\section{Implementation Details}
\label{app:implementation}

\paragraph{Environment.}
All experiments are conducted on a cloud server running Ubuntu 22.04, equipped with one NVIDIA RTX 5090 and 25 vCPUs (Intel Xeon Platinum 8470Q).

\paragraph{Experimental Setup.}
We evaluate CPR on WebQSP, CWQ, PQ and PQL dataset. On each dataset, the calibration set is obtained by randomly holding out 10\% of the original training set. We use all-MiniLM-L6-v2, a 6-layer Transformer with approximately 22M parameters, via SentenceTransformers to encode queries, relations, and paths into 384-dimensional embeddings. RCVNet is trained using the Adam optimizer with a default learning rate of $5 \times 10^{-3}$, a batch size of 256, for 6 epochs. For inference-time TreeG retrieval, we use a default branch-out size of $B=32$ and an active set size of $A=32$, unless otherwise specified.

\paragraph{PUCT Hyperparameters.}
The exploration coefficient is set to $c_{\text{puct}}=2$. For each query, we perform 32 rollouts with maximum depth $H$ (2 for WebQSP and PQ, 4 for CWQ and PQL). Beta priors are initialized as $\text{Beta}(1,1)$ for all relations and updated according to Eq.~\ref{eq:beta_prior}. The Q-values $Q(p,r)$ and visit counts $N(p,r)$ are initialized to 0 at the start of each query. The 
semantic prior $P(p,r)$ is computed via softmax with temperature $\tau= 1.0$. A rollout is considered successful if the terminal entity belongs to the ground-truth answer set $\mathcal{Y}_q$.

\paragraph{RCVNet Architecture.}
As illustrated in Figure~\ref{fig:rcvnet}, RCVNet is a residual MLP that takes a 3-dimensional scalar input $\mathbf{x}(p) = [s(\tilde{q}, r_t), s(\tilde{q}, r_{1:t}), \rho_{r_t}]$ representing local similarity, path similarity, and relation prior, respectively. The network consists of 3 fully-connected layers with hidden dimension 256 and ReLU activations, followed by a linear output layer. We employ FiLM (Feature-wise Linear Modulation) conditioning: the conditioning vector $\mathbf{c}(p) = \text{concat}(\mathbf{q}_{\text{emb}}, \mathbf{r}_{\text{emb}}, \mathbf{p}_{\text{emb}}) \in \mathbb{R}^{3d}$ is passed through a 2-layer conditioner network to produce scale and shift parameters $(\gamma, \beta)$, which modulate each hidden layer via $\mathbf{h} \leftarrow \mathbf{h} \cdot (1 + \gamma) + \beta$. The output layer is initialized to zero so that the initial residual $\Delta_{\bm\theta}(p) \approx 0$, ensuring that the model starts from the semantic baseline.
\paragraph{LLM Configuration.}
By default, we use Qwen3-8B served via vLLM for generating relational hints for TreeG retrieval unless otherwise specified. The model is queried with temperature 0 and thinking mode disabled to ensure deterministic outputs. For each query, the model generates up to 4 candidate relation chains with maximum hop depth $H$. We use a one-shot prompt that specifies the output format with an example:

{\fontsize{8pt}{6pt}\ttfamily 
You are helping a Freebase-style KGQA system.\\
Given a question, propose up to 4 likely relation chains (1 to \{H\} hops).\\
Relations must be in dot-separated format like "common.topic.image".\\
Output STRICT JSON ONLY in this exact format:\\
\{"chains":[["relation1"],["relationA","relationB"]]\}
Example of correct output:\\
\{"chains":[["people.person.place\_of\_birth"], 
["location.location.contains","people.person.nation\\-ality"]]\}\\
Question: \{question\}\\
JSON:}

\section{Hyperparameter Study of PUCT Rollouts}
\label{app:rollout}
To investigate the impact of the PUCT rollout budget on CPR, we vary the number of rollouts per query on WebQSP at $\alpha = 0.7$ as a case study. As a lower-bound reference, we include a \textit{random walk} baseline (0 rollouts), where relations are selected uniformly at random at each step rather than according to the PUCT criterion. This baseline isolates the effect of guided exploration from that of simply collecting training trajectories.

\begin{table}[h]
\centering
\caption{Hyperparameter study on the PUCT rollout budget on WebQSP at $\alpha = 0.7$. ``0 (Random Walk)'' denotes trajectory collection without PUCT guidance.}
\setlength{\tabcolsep}{1mm}
\resizebox{\linewidth}{!}{
\begin{tabular}{cccc}
\toprule
\#Rollouts & ECR (\%) $\uparrow$ & APSS $\downarrow$ & Coverage Efficiency (\%) $\uparrow$ \\
\midrule
0 (Random Walk) & 52.5 & 3.02 & 17.4 \\
4               & 52.9 & 3.02 & 17.5 \\
8               & 53.4 & 3.02 & 17.7 \\
16              & 54.6 & 3.01 & 18.1 \\
32              & 55.1 & 3.01 & 18.3 \\
64              & 56.4 & 2.98 & 18.9 \\
\bottomrule
\end{tabular}
}
\label{tab:rollout_budget}
\end{table}

As reported in Table~\ref{tab:rollout_budget}, several consistent trends emerge. First, even a minimal budget of 4 rollouts yields a marginal but measurable improvement over random walk, increasing ECR from 52.5\% to 52.9\%, confirming that PUCT-guided trajectory collection provides supervision signals beyond random path sampling. Second, the performance of CPR improves steadily as the rollout budget increases, with Coverage Efficiency rising from 17.4\% at 0 rollouts to 18.3\% at 32 rollouts, which corresponds to an improvement of 4.57\%, while APSS remains stable. This indicates that additional rollouts improve score discriminability without inflating prediction set size. Third, increasing to 64 rollouts yields further improvements (18.9\%), albeit with smaller incremental gains. Based on this analysis, we adopt 32 rollouts as the default configuration throughout our experiments, as it offers a favorable trade-off between trajectory quality and computational cost.

\section{Sensitivity to LLM Backbones}
\label{app:llm_hint}
We analyze the sensitivity of CPR to the quality of LLM-generated relational hints by varying the LLM backbone used for TreeG retrieval during calibration and inference, ranging from Qwen3-8B down to Qwen3-0.6B, and further including a setting where LLM hints are entirely removed. To contextualize these results, we also report the corresponding performance of UaG under the same LLM configurations. All experiments are conducted on WebQSP at $\alpha = 0.8$, and the comparison results are presented in Table~\ref{tab:llm_hint}.

\begin{table}[h]
\centering
\caption{Ablation study of LLM hint quality on WebQSP at $\alpha = 0.8$. ``w/o LLM Hints'' denotes TreeG retrieval with relation hints removed. ``Failed'' indicates ECR $< 1 - \alpha$, i.e., the coverage guarantee is violated.}
\setlength{\tabcolsep}{1.5mm}
\resizebox{\linewidth}{!}{
\begin{tabular}{lcccc}
\toprule
\multirow{2}{*}{LLM Setting} & \multicolumn{2}{c}{CPR} & \multicolumn{2}{c}{UaG} \\
\cmidrule(lr){2-3} \cmidrule(lr){4-5}
 & ECR (\%) $\uparrow$ & APSS $\downarrow$ & ECR (\%) $\uparrow$ & APSS $\downarrow$ \\
\midrule
Qwen3-8B    & \textbf{50.8} & \textbf{1.62} & 25.6          & 1.67 \\
Qwen3-4B    & 45.6          & 2.45          & 22.6          & 2.70 \\
Qwen3-1.7B  & 42.8          & 2.49          & 21.9          & 2.67 \\
Qwen3-0.6B  & 42.3          & 2.50          & 20.4          & 2.75 \\
w/o LLM Hints & 42.0        & 2.54          & 19.8 (Failed) & 2.81 \\
\bottomrule
\end{tabular}
}
\label{tab:llm_hint}
\end{table}

As shown above, CPR exhibits graceful degradation as the quality of LLM-generated hints decreases. Reducing the LLM backbone from Qwen3-8B to Qwen3-0.6B leads to a moderate increase in APSS (from 1.62 to 2.50) and a corresponding decline in ECR (from 50.8\% to 42.3\%), yet CPR maintains valid coverage across all configurations. Crucially, even without LLM-generated hints, CPR achieves an ECR of 42.0\% with an APSS of 2.54, which remains comfortably above the required coverage level $1 - \alpha = 20\%$. Moreover, its ECR remains markedly higher than that of UaG (25.6\%) equipped with the strongest Qwen3-8B hints.

In contrast, UaG exhibits considerably greater reliance on the quality of LLM-generated hints. Its ECR declines more sharply as the LLM backbone shrinks, and without LLM hints, UaG fails to satisfy the coverage guarantee (ECR $= 19.8\% < 1-\alpha = 20\%$). These results reflect a fundamental difference in how the two methods utilize LLM guidance: in CPR, LLM hints act as a soft signal for hint-augmented path scoring (Eq.~\ref{eq:llm_hint}), and their absence is largely mitigated by path-level conformal calibration; in UaG, however, LLM outputs constitute a hard dependency within the stage-wise pipeline, rendering coverage guarantees significantly more vulnerable to degradation in LLM guidance.

Overall, these results demonstrate that the statistical validity of CPR is largely independent of LLM quality. While stronger LLMs provide incremental efficiency gains by yielding tighter prediction sets, the conformal coverage guarantee remains valid under degraded LLM guidance and even in the absence of LLM-generated hints.


\end{document}

%% file: Tables/main_eval.tex
\begin{table*}[th!]
\label{tab:results}
\centering
\caption{The comparison results on WebQSP, CWQ, PQ and PQL across various risk levels $\alpha$. ``--'' indicates failure to satisfy the coverage guarantee (ECR $< 1 - \alpha$). The best results are highlighted in \textbf{bold}.} 
\setlength{\tabcolsep}{1.5mm}
\renewcommand{\arraystretch}{0.98}
\resizebox{\textwidth}{!}{
\begin{tabular}{cl|cccccc|cccccc|cccccc}
\toprule
\multirow{2}{*}{Dataset}& \multirow{2}{*}{Method} & \multicolumn{6}{|c}{ECR (\%) $\uparrow$ } & \multicolumn{6}{c}{APSS $\downarrow$} & \multicolumn{6}{c}{Coverage Efficiency (\%) $\uparrow$} \\
&
& 0.3 & 0.4 & 0.5 & 0.6 & 0.7 & 0.8
& 0.3 & 0.4 & 0.5 & 0.6 & 0.7 & 0.8
& 0.3 & 0.4 & 0.5 & 0.6 & 0.7 & 0.8 \\
\midrule
\multirow{6}{*}{\rotatebox{90}{WebQSP}} & UaG
& 72.3 & 60.4 & 51.8 & 41.8 & 37.4 & 25.6
& 98.81 & 32.52 & 14.29 & 8.23 & 5.58 & 1.67
& 0.73 & 1.86 & 3.63 & 5.08 & 6.70 & 15.3 \\
 & CLM
& -- & 60.2 & 51.0 & 40.1 & 35.6 & 24.6
& -- & 46.69 & 14.37 & 5.74 & \textbf{2.78} & 1.82
& -- & 1.29 & 3.55 & 6.99 & 12.8 & 13.5 \\
 & LoFreeCP
& -- & 60.0 & 50.1 & 41.0 & 36.7 & 24.1
& -- & 48.32 & 10.28 & 6.89 & 3.05 & \textbf{1.05}
& -- & 1.24 & 4.87 & 5.95 & 12.0 & 23.0 \\
 & CPR (Abl. 1)
& 76.0 & 65.5 & 57.3 & 53.4 & 49.8 & 43.2
& 22.05 & 14.84 & 10.10 & 8.26 & 3.80 & 2.16
& 3.45 & 4.41 & 5.67 & 6.46 & 13.1 & 20.0 \\
 & CPR (Abl. 2)
& 76.0 & 65.6 & 61.0 & 57.3 & 52.5 & 50.0
& 19.76 & \textbf{9.89} & 7.53 & 5.50 & 3.02 & 1.88
& 3.85 & 6.63 & 8.10 & 10.4 & 17.4 & 26.6 \\
 & CPR
& \textbf{76.8} & \textbf{67.3} & \textbf{61.4} & \textbf{58.3} & \textbf{55.1} & \textbf{50.8}
& \textbf{17.77} & 10.10 & \textbf{7.14} & \textbf{5.45} & 3.01 & 1.62
& \textbf{4.32} & \textbf{6.66} & \textbf{8.60} & \textbf{10.7} & \textbf{18.3} & \textbf{31.4} \\
\midrule
\multirow{6}{*}{\rotatebox{90}{CWQ}} & UaG
& -- & -- & -- & 42.1 & 37.3 & 26.7
& -- & -- & -- & 120.20 & 99.16 & 27.96
& -- & -- & -- & 0.35 & 0.38 & 0.96 \\
 & CLM
& -- & -- & -- & -- & 32.2 & 22.5
& -- & -- & -- & -- & 10.48 & 5.33
& -- & -- & -- & -- & 3.07 & 4.22 \\
 & LoFreeCP
& -- & -- & -- & -- & 32.0 & 25.6
& -- & -- & -- & -- & 8.78 & 3.73
& -- & -- & -- & -- & 3.64 & 6.86 \\
 & CPR (Abl. 1)
& 72.2 & 65.5 & 57.8 & 50.0 & 42.2 & 27.6
& 33.43 & 27.84 & 20.50 & 14.41 & 10.96 & 4.61
& 2.16 & 2.35 & 2.82 & 3.47 & 3.85 & 5.99 \\
 & CPR (Abl. 2)
& 75.9 & 67.5 & 58.3 & 50.0 & 42.5 & 28.7
& 24.51 & 17.85 & 13.17 & 9.60 & 7.23 & 3.44
& 3.10 & 3.78 & 4.43 & 5.21 & 5.88 & 8.34 \\
 & CPR
& \textbf{76.6} & \textbf{68.1} & \textbf{58.8} & \textbf{50.2} & \textbf{42.8} & \textbf{29.9}
& \textbf{23.98} & \textbf{15.28} & \textbf{11.40} & \textbf{8.35} & \textbf{6.08} & \textbf{3.39}
& \textbf{3.20} & \textbf{4.46} & \textbf{5.16} & \textbf{6.01} & \textbf{7.04} & \textbf{8.82} \\
\midrule
\multirow{6}{*}{\rotatebox{90}{PQ}} & UaG
& -- & -- & 54.6 & 41.3 & 37.5 & 33.1
& -- & -- & 3.79 & 1.56 & \textbf{1.24} & \textbf{1.01}
& -- & -- & 14.4 & 26.5 & 30.2 & 32.8 \\
 & CLM
& -- & -- & -- & -- & 30.8 & 21.3
& -- & -- & -- & -- & 2.49 & 1.42
& -- & -- & -- & -- & 12.4 & 15.0 \\
 & LoFreeCP
& -- & -- & -- & -- & 31.2 & 22.4
& -- & -- & -- & -- & 1.98 & 1.21
& -- & -- & -- & -- & 15.8 & 18.5 \\
 & CPR (Abl. 1)
& -- & 60.4 & 52.8 & 45.9 & 35.7 & 33.6
& -- & 7.47 & 4.36 & 2.75 & 1.76 & 1.21
& -- & 8.09 & 12.1 & 16.7 & 20.3 & 27.8 \\
 & CPR (Abl. 2)
& \textbf{70.1} & 60.8 & 54.2 & 46.3 & 38.5 & 36.4
& 22.5 & 6.92 & 3.11 & 1.64 & 1.32 & 1.19
& 3.12 & 8.79 & 17.4 & 28.2 & 29.2 & 30.6 \\
 & CPR
& \textbf{70.1} & \textbf{61.3} & \textbf{55.1} & \textbf{47.1} & \textbf{40.8} & \textbf{37.5}
& \textbf{21.8} & \textbf{6.80} & \textbf{2.95} & \textbf{1.49} & 1.28 & 1.10
& \textbf{3.22} & \textbf{9.01} & \textbf{18.7} & \textbf{32.3} & \textbf{31.9} & \textbf{34.1} \\
\midrule
\multirow{6}{*}{\rotatebox{90}{PQL}} & UaG
& -- & -- & 52.1 & 42.7 & 37.0 & 28.6
& -- & -- & 2.25 & 1.68 & \textbf{1.05} & \textbf{0.79}
& -- & -- & 23.2 & 25.4 & 35.2 & 36.2 \\
 & CLM
& -- & -- & -- & -- & 33.3 & 27.6
& -- & -- & -- & -- & 8.61 & 6.18
& -- & -- & -- & -- & 3.88 & 4.47 \\
 & LoFreeCP
& -- & -- & -- & -- & 31.8 & 21.9
& -- & -- & -- & -- & 3.68 & 0.94
& -- & -- & -- & -- & 8.64 & 23.3 \\
 & CPR (Abl. 1)
& -- & 60.6 & 51.4 & 42.1 & 35.1 & 31.9
& -- & 7.25 & 3.47 & 2.46 & 1.62 & 1.12
& -- & 8.36 & 14.8 & 17.1 & 21.7 & 28.5 \\
 & CPR (Abl. 2)
& 70.6 & 61.8 & 52.4 & 44.3 & 37.9 & 36.4
& 22.43 & 6.12 & 2.89 & 1.77 & 1.28 & 1.03
& 3.15 & 10.1 & 18.1 & 25.0 & 29.6 & 35.3 \\
 & CPR
& \textbf{71.3} & \textbf{62.7} & \textbf{53.1} & \textbf{45.2} & \textbf{39.0} & \textbf{38.5}
& \textbf{20.85} & \textbf{5.87} & \textbf{2.65} & \textbf{1.51} & 1.09 & 0.95
& \textbf{6.57} & \textbf{10.7} & \textbf{20.0} & \textbf{29.9} & \textbf{35.8} & \textbf{40.5} \\
\bottomrule
\end{tabular}
}
\label{tab:full_results}
\end{table*}